\def\th@plain{%
  \thm@notefont{}
  \itshape 
}
\theoremstyle{plain}
\declaretheorem[name=Theorem]{theorem}
\declaretheorem[name=Lemma,sibling=theorem]{lemma}
\declaretheorem[name=Corollary,sibling=theorem]{corollary}
\declaretheorem[name=Proposition,sibling=theorem]{proposition}
\theoremstyle{remark}
\theoremstyle{definition}
\declaretheorem[name=Definition]{definition}
\DeclareSymbolFont{bbsymbol}{U}{bbold}{m}{n}
\DeclareMathSymbol{\unitmatrix}{\mathbin}{bbsymbol}{"31}
\newcounter{todocounter}
\colorlet{jgcolor}{green!40!white}
\newcommand{\jginline}[2][]{
  \ifthenelse { \equal {#1} {} }
    { \def\temp {#2} }  
    { \def\temp {#1} }   
  \refstepcounter{todocounter}\todo[color=jgcolor,inline,caption={\textbf{\thetodocounter. JG} \temp}]{\textbf{\thetodocounter. JG:} #2}{}}
\newcommand{\jgblock}[2][{}]{
  \ifthenelse { \equal {#1} {} }
  { \def\templist {\emph{block comment}}
    \def\tempheader {}}  
  { \def\templist {#1}
    \def\tempheader {#1}}   
  \refstepcounter{todocounter}\todo[color=jgcolor,inline,caption={\textbf{\thetodocounter. JG} \templist}]{\textbf{\thetodocounter. JG: \tempheader}\\\begin{minipage}{\textwidth}#2\end{minipage}}{}}
\definecolor{lhcolor}{HTML}{89C1F9}
\newcommand{\lhnote}[1]{\refstepcounter{todocounter}\todo[color=lhcolor,linecolor=black,size=\scriptsize,caption={\textbf{\thetodocounter. LH} #1}]{\textbf{\thetodocounter. LH:}\\#1}{}}
\newcommand{\lhinline}[2][]{
  \ifthenelse { \equal {#1} {} }
    { \def\temp {#2} }  
    { \def\temp {#1} }   
  \refstepcounter{todocounter}\todo[color=lhcolor,inline,caption={\textbf{\thetodocounter. LH} \temp}]{\textbf{\thetodocounter. LH:} #2}{}}
\newcommand{\lhblock}[2][{}]{
  \ifthenelse { \equal {#1} {} }
  { \def\templist {\emph{block comment}}
    \def\tempheader {}}  
  { \def\templist {#1}
    \def\tempheader {#1}}   
  \refstepcounter{todocounter}\todo[color=lhcolor,inline,caption={\textbf{\thetodocounter. LH} \templist}]{\textbf{\thetodocounter. LH: \tempheader}\\\begin{minipage}{\textwidth}#2\end{minipage}}{}}
\colorlet{dpcolor}{yellow!40!white}
\newcommand{\dpnote}[1]{\refstepcounter{todocounter}\todo[color=dpcolor,linecolor=black,size=\scriptsize,caption={\textbf{\thetodocounter. DP} #1}]{\textbf{\thetodocounter. DP:}\\#1}{}}
\newcommand{\dpinline}[2][]{
  \ifthenelse { \equal {#1} {} }
    { \def\temp {#2} }  
    { \def\temp {#1} }   
  \refstepcounter{todocounter}\todo[color=dpcolor,inline,caption={\textbf{\thetodocounter. DP} \temp}]{\textbf{\thetodocounter. DP:} #2}{}}
\newcommand{\dpblock}[2][{}]{
  \ifthenelse { \equal {#1} {} }
  { \def\templist {\emph{block comment}}
    \def\tempheader {}}  
  { \def\templist {#1}
    \def\tempheader {#1}}   
  \refstepcounter{todocounter}\todo[color=dpcolor,inline,caption={\textbf{\thetodocounter. DP} \templist}]{\textbf{\thetodocounter. DP: \tempheader}\\\begin{minipage}{\textwidth}#2\end{minipage}}{}}
\colorlet{hlcolor}{red!40!white}
\newcommand{\hlinline}[2][]{
  \ifthenelse { \equal {#1} {} }
    { \def\temp {#2} }  
    { \def\temp {#1} }   
  \refstepcounter{todocounter}\todo[color=hlcolor,inline,caption={\textbf{\thetodocounter. HL} \temp}]{\textbf{\thetodocounter. HL:} #2}{}}
\newcommand{\hlblock}[2][{}]{
  \ifthenelse { \equal {#1} {} }
  { \def\templist {\emph{block comment}}
    \def\tempheader {}}  
  { \def\templist {#1}
    \def\tempheader {#1}}   
  \refstepcounter{todocounter}\todo[color=hlcolor,inline,caption={\textbf{\thetodocounter. HL} \templist}]{\textbf{\thetodocounter. HL: \tempheader}\\\begin{minipage}{\textwidth}#2\end{minipage}}{}}
\colorlet{mgcolor}{orange!40!white}
\newcommand{\mginline}[2][]{
  \ifthenelse { \equal {#1} {} }
    { \def\temp {#2} }  
    { \def\temp {#1} }   
  \refstepcounter{todocounter}\todo[color=mgcolor,inline,caption={\textbf{\thetodocounter. MG} \temp}]{\textbf{\thetodocounter. MG:} #2}{}}
\newcommand{\mgblock}[2][{}]{
  \ifthenelse { \equal {#1} {} }
  { \def\templist {\emph{block comment}}
    \def\tempheader {}}  
  { \def\templist {#1}
    \def\tempheader {#1}}   
  \refstepcounter{todocounter}\todo[color=mgcolor,inline,caption={\textbf{\thetodocounter. MG} \templist}]{\textbf{\thetodocounter. MG: \tempheader}\\\begin{minipage}{\textwidth}#2\end{minipage}}{}}
\colorlet{obcolor}{pink!40!white}
\newcommand{\obinline}[2][]{
  \ifthenelse { \equal {#1} {} }
    { \def\temp {#2} }  
    { \def\temp {#1} }   
  \refstepcounter{todocounter}\todo[color=obcolor,inline,caption={\textbf{\thetodocounter. OB} \temp}]{\textbf{\thetodocounter. OB:} #2}{}}
\newcommand{\obblock}[2][{}]{
  \ifthenelse { \equal {#1} {} }
  { \def\templist {\emph{block comment}}
    \def\tempheader {}}  
  { \def\templist {#1}
    \def\tempheader {#1}}   
  \refstepcounter{todocounter}\todo[color=obcolor,inline,caption={\textbf{\thetodocounter. JG} \templist}]{\textbf{\thetodocounter. JG: \tempheader}\\\begin{minipage}{\textwidth}#2\end{minipage}}{}}
\newcommand{\re}[1]{\textrm{Re}#1}
\newcommand{\im}[1]{\textrm{Im}#1}
\newcommand{\ra}{\rightarrow}
\newcommand{\Rb}{\mathbb{R}}
\newcommand{\Cb}{\mathbb{C}}
\newcommand{\Zb}{\mathbb{Z}}
\newcommand{\tr}{\text{Tr}}
\newcommand{\relu}{\mathrm{ReLU}}
\newcommand{\tils}{\Tilde{\sigma}}
\newcommand{\bzt}{b^{z_t}}
\newcommand{\gebl}{GEBL\xspace}
\newcommand{\geblnet}{GEBLNet\xspace}
\newcommand{\geconvnet}{GEConvNet\xspace}
\newcommand{\geconv}{GEConv\xspace}
\newcommand{\geact}{GEAct\xspace}
\newcommand{\trnorm}{TrNorm\xspace}
\newcommand{\trlayer}{Trace\xspace}
\newcommand{\trmlp}{TrMLP\xspace}
\newcommand{\dense}{Dense\xspace}
\def\@fnsymbol#1{\ensuremath{\ifcase#1\or a\or b\or
   c\or d\or e\or f\or g
   \or h \else\@ctrerr\fi}}
\renewenvironment{abstract}{
  \small
  \begin{center}
    \bfseries \abstractname\vspace{-1.5em}\vspace{\z@}
  \end{center}
  \begin{center}
    \begin{minipage}{0.43\textwidth} 
}{
    \end{minipage}
  \end{center}
  \endquotation
}
\title{ Learning Chern Numbers of Topological Insulators\\with Gauge Equivariant Neural Networks}
\author[1]{Longde~Huang~\thanks{longde@chalmers.se}}
\author[2]{Oleksandr~Balabanov}
\author[1,4]{Hampus~Linander}
\author[3]{Mats~Granath}
\author[1]{Daniel~Persson}
\author[1]{Jan~E.~Gerken~\thanks{gerken@chalmers.se}}
\affil[1]{Department of Mathematical Sciences, Chalmers University of Technology and University of Gothenburg,\newline SE-412 96 Gothenburg, Sweden.}
\affil[2]{Department of Physics, Stockholm University, AlbaNova University Center, SE-106 91 Stockholm, Sweden}
\affil[3]{Department of Physics, University of Gothenburg, SE-412 96 Gothenburg, Sweden}
\affil[4]{VERSES AI Research Lab, Los Angeles, USA}
\date{}
\begin{document}

\maketitle
\vspace{-2em}

\begin{abstract}
  Equivariant network architectures are a well-established tool for predicting invariant or equivariant quantities. However, almost all learning problems considered in this context feature a global symmetry, i.e. each point of the underlying space is transformed with the same group element, as opposed to a local ``gauge'' symmetry, where each point is transformed with a different group element, exponentially enlarging the size of the symmetry group. Gauge equivariant networks have so far mainly been applied to problems in quantum chromodynamics. Here, we introduce a novel application domain for gauge-equivariant networks in the theory of topological condensed matter physics. We use gauge equivariant networks to predict topological invariants (Chern numbers) of multiband topological insulators. The gauge symmetry of the network guarantees that the predicted quantity is a topological invariant. We introduce a novel gauge equivariant normalization layer to stabilize the training and prove a universal approximation theorem for our setup. We train on samples with trivial Chern number only but show that our models generalize to samples with non-trivial Chern number. We provide various ablations of our setup. Our code is available at \url{https://github.com/sitronsea/GENet/tree/main}.
\end{abstract}
\vspace{-2em}
\section{Introduction}
\vspace{-1em}

Geometric deep learning is a subfield of machine learning that takes advantage of the geometric and topological structures inherent in complex data to construct more efficient neural network architectures~\cite{gerken2023}.   
This approach has been successfully applied in a variety of domains, from medical imaging~\cite{bekkers2018} to high-energy physics~\cite{pmlr-v119-bogatskiy20a} and quantum chemistry~\cite{duval2023a}. As we show in this paper, this perspective is particularly valuable for studying topological insulators, a class of materials which has been one of the main areas of interest in condensed matter physics over the last two decades~\cite{moore2010birth,RevModPhys.82.3045},  with a broad range of applications, including spintronics and magnetoelectronics \cite{he2022topological}, photonics \cite{Lu2014photonics}, quantum devices \cite{Jin2023devices}, and quantum computing \cite{zanardi1999holonomic, Nayak2008tqc, Beenakker2013tqc}. 

The mathematical field of topology studies objects which cannot be deformed continuously into each other. For instance, a doughnut is topologically equivalent to a coffee cup (the hole in the doughnut becoming the hole in the handle) but not to a ball (which does not have a hole). Topological insulators are materials whose interior is insulating yet whose surface or boundary supports robust conduction. This behavior arises from the different topologies of the electronic band structures inside and outside of the material. The topologies are characterized by quantities which do not change under continuous deformations respecting the underlying system's physical symmetries; these are known as topological invariants.

In the context of topological insulators, the deformations we are interested in only affect the phase of the electron wavefunction and therefore leave the underlying physical system unchanged, corresponding to a so-called gauge symmetry. Even though these phase changes are locally unphysical, they accumulate over momentum space into the \emph{Chern number}. This is a physically relevant topological invariant and the learning target of the models studied here. We construct a neural network which predicts the Chern number given a specific discretized version of the wave function as detailed below. In particular, our focus will be on so-called multi-band topological insulators in which several wave functions are combined into a vector.
Previous works could construct deep learning systems which predict Chern numbers for materials with only 1 filled band%
~\cite{PhysRevB.98.085402,PhysRevResearch.2.013354}. Perhaps surprisingly, these models fail to learn Chern numbers for higher-band systems, pointing to a fundamental challenge in the multi-band regime. In contrast, our model is able to predict Chern numbers of materials with at least 7 filled bands.

We stress that learning the Chern number may be viewed as a toy model for more interesting topological insulators. However, since even learning the Chern number is challenging, this is a stepping stone towards more sophisticated physical systems exhibiting richer topological—and consequently physical—properties.

We identify the \emph{gauge symmetry} of the system as the central reason for the failure of traditional approaches in the high-band setting and instead propose to use a gauge equivariant network for learning topological invariants such as the Chern number. Usual group equivariant networks $\mathcal{N}:X\rightarrow Y$ satisfy the constraint $\mathcal{N}(\rho_X(g) x)=\rho_Y(g)\mathcal{N}(x)$ $\forall g \in G$ with symmetry group $G$ and representations $\rho_{X,Y}$ on the input- and output spaces, respectively. In contrast, a gauge equivariant network satisfies a much stronger condition in which the group element $g$ can depend on the input $x$: $\mathcal{N}(\rho_X(g_x) x)=\rho_Y(g_x)\mathcal{N}(x)$. In our case, the input space $X$ is the Fourier transform of the position, the so-called \emph{Brillouin zone} of the material, and $G=\mathrm{U}(N)$ for a system with $N$ filled bands.

In a discretized input domain $X$ in $d$ dimensions with $p$ points per dimension, a gauge symmetry effectively means that the total symmetry group of the problem consists of $p^d$ copies of $G$. This enormous enlargement of the symmetry group explains why non-equivariant networks are often unable to learn tasks which feature a gauge symmetry. We therefore claim that gauge-equivariant networks are necessary to learn high-band Chern numbers of topological insulators. This is a novel application for gauge equivariant neural networks, which thus far have mainly found applications within the realm of lattice gauge theories for quantum chromodynamics. In particular, we cast the problem at hand in a form in which we can use an adapted version of the Lattice Gauge Equivariant Convolutional Neural Networks (LGE-CNNs)~\cite{favoni2022} to learn multiband Chern numbers.

Our main contributions are as follows:

\begin{itemize}
    \item We provide a novel application of $\mathrm{U}(N)$-gauge equivariant neural networks to the task of learning higher-band Chern numbers. The resulting model can predict Chern numbers in two dimensions for systems with at least $N=7$ filled bands. In contrast, previous models could only handle the trivial case of a $\mathrm{U}(1)$-symmetry.
    \item Our model is also the first to be able to learn higher-dimensional ($D=4$) Chern numbers of multi-band topological insulators.
    \item We prove a universal approximation theorem for our architecture in this context which shows that our model can approximate all $\mathrm{U}(N)$ gauge invariants arbitrarily well. We perform ablations over different gauge equivariant architectures motivated by this theoretical investigation.
    \item To stabilize the training, we introduce a new gauge equivariant normalization layer. By training our purely local network with a novel combinations of loss functions, we obtain a model which generalizes from trivial to non-trivial Chern numbers and to unseen lattice sizes. 
\end{itemize}

\section{Literature Review}
Gauge equivariant networks have been considered in two different settings. In the first setting, the gauge symmetry concerns local coordinate changes in the domain of the feature maps~\cite{bronstein2021,weiler2023EquivariantAndCoordinateIndependentCNNs,gerken2023}. This case was first studied theoretically in~\cite{cheng2019} and models respecting this symmetry were introduced in~\cite{cohen2019,dehaan2020}. Applications of gauge equivariant networks to lattice quantum chromodynamics (QCD) fall into the second setting, where the gauge transformations act on the co-domain of the feature maps. An important problem in lattice QCD is sampling configurations from the lattice action, a problem for which gauge equivariant normalizing flows~\cite{kanwar2020,boyda2021,nicoli2021,bacchio2023,abbott2023} as well as gauge equivariant neural-network quantum states~\cite{luo2021a} have been used. In contrast, our model is based on a gauge equivariant prediction model developed for lattice QCD~\cite{favoni2022}.

Machine learning for quantum physics has seen an explosive development over the last decade with applications in condensed matter physics, materials science, quantum information and quantum computing to name a few~\cite{Carleo_2019,Carrasquilla_2020,Krenn,dawid2023modernapplicationsmachinelearning}.  In this brief overview we focus on machine learning of topological states of matter. Early ground-breaking work in this area include~\cite{Carrasquilla_2017} that used supervised learning on small convolutional neural networks for identifying the ground state of the Ising lattice gauge theory, as well as~\cite{van_Nieuwenburg_2017} that developed an unsupervised method “learning by confusion” to study phase transitions including topological order of the Kitaev chain. Unsupervised approaches include the use of diffusion maps~\cite{PhysRevB.102.134213} and topology preserving data generation~\cite{PhysRevLett.124.226401}. Of particular relevance to our work are the papers~\cite{Zhang_2018,PhysRevB.98.085402} that used convolutional neural networks and supervised learning to predict $\mathrm{U}(1)$ topological invariants.
This work was later extended to an unsupervised setting in~\cite{PhysRevResearch.2.013354,balabanov2020unsupervised} by incorporating the scheme of learning by confusion and augmenting data using topology preserving deformations. 
\section{Learning Multiband Chern\\ Numbers}
\label{sec:chern_intro}
The band structure of the topological insulators we want to consider here is described by so-called Bloch Hamiltonians $H(k)$ which are maps from the Brillouin zone to the space of $M\times M$ complex Hermitian matrices. The Brillouin zone is the space of momenta $k$ of the electrons which is periodic in each dimension. At each point $k$ in the Brillouin zone, we consider the eigenvectors $v_n(k)\in\mathbb{C}^{M}$, $n=1,\dots,N$, of the Bloch Hamiltonian with negative eigenvalues since these correspond to the bands occupied by electrons in the material.
    

\subsection{Chern numbers for topological \\ insulators}\label{sec:chern}
A nontrivial Chern number means that it is impossible to find smoothly varying eigenvectors over the entire Brillouin zone. In two dimensions, it is defined via
\begin{align} 
C = \frac{1}{2 \pi i}\int_{BZ} \, \, \mathrm{Tr} \, [\mathcal{F}(k)]\,d^2 k\,,
\label{eq:C1}
\end{align}
where $\mathcal{F}$ is an $N\times N$-matrix known as the non-abelian Berry curvature defined by
\begin{align} 
\begin{split} 
\mathcal{F} &= \partial_{k_x} \mathcal{A}_y (k) - \partial_{k_y} \mathcal{A}_x (k) + [\mathcal{A}_x (k), \mathcal{A}_y (k)]\,.
\end{split}
\label{eq:F1}
\end{align}
Here, $\mathcal{A}_\mu(k)$, $\mu=x,y$ is the non-abelian Berry connection, another $N\times N$-matrix whose components are $\mathbb{C}^2$-vectors given by
\begin{align} 
\begin{split} 
[\mathcal{A}_\mu(k)]_{n,m} & = v_n(k)^\top \partial_{k_\mu} v_m(k)
\end{split}
\label{eq:A1}
\end{align}
in terms of the eigenvectors of the Bloch Hamiltonian with negative eigenvalues.

It can be shown that the Chern number defined by~\eqref{eq:C1} is an integer and there are generalizations to higher dimensional Brillouin zones, on which we performed experiments, see Section~\ref{sec:expr-results}.


In practice, we consider a discretization of the Brillouin zone into a rectangular grid with periodic boundary conditions, of which there are a total of $N_x \times N_y=N_\text{site}$ grid points. On this grid, one can define the following discrete, integer Chern number $\tilde{C}$ which converges to $C$ for vanishing grid spacing~\cite{fukui2005chern}
\begin{equation}
\tilde{C} = \sum_{(i, j)} \mathrm{Im} \, \mathrm{Tr} \, \log \, W_{i, j}\,, 
\label{eq:C2}
\end{equation}
where $W_{i, j}\in\mathrm{U}(N)$ is the Wilson loop at grid point $\Vec{k}=(i,j)$, defined by
\begin{equation}
    W_k=W_{i, j} = U^x_{i, j} U^y_{i - 1, j} U^x_{i - 1, j - 1} U^y_{i, j - 1} \label{eq:fluxdef}
\end{equation}
in terms of the link matrices $U^x_{i, j},U^y_{i, j}\in\mathbb{C}^{N\times N}$. These links capture the overlap between the eigenvectors of the Bloch Hamiltonians of neighboring grid points and have components
\begin{align} 
 [U^x_{i, j}]_{m,n} &= v_m(k_{i,j})^\top v_n(k_{i-1,j})\\
 [U^y_{i, j}]_{m,n} &= v_m(k_{i,j})^\top v_n(k_{i,j-1})\,.
\label{eq:U1}
\end{align}
The links are discrete analogous of the operator $\exp(i\mathcal{A}(k) d{k})$. The Wilson loops correspond to closed $1\times 1$ loops of the link variables. In higher dimensions, there are several Wilson loops $W^\gamma_k$ per grid point $k$ which are aligned with different directions $\gamma$ in the lattice.

The learning task we will study in this article is to predict the discrete Chern number $\tilde{C}\in\mathbb{Z}$ given the Wilson loops $W_{i,j}\in\mathbb{C}^{N\times N}$ on the lattice. One of our models also uses the links as additional input. Although the Chern number is given by the innocuous-looking equation~\eqref{eq:C2}, learning it is not straightforward as detailed in Section~\ref{sec:res_nets}. We will show that the main reason for the difficulty of learning~\eqref{eq:C2} lies in the gauge invariance of $\tilde{C}$.

\subsection{Gauge symmetry}
\label{sec:gauge_sym}
All topological indices, including the Chern number, are invariant under the gauge group $\mathrm{U}(N)$. Gauge transformations are local symmetries, i.e.\ at each lattice point $i,j$ they act with a different group element $\Omega_{i,j}\in\mathrm{U}(N)$ by
  \begin{align} U^{x,y}_{i,j}&\rightarrow (\Omega_{i,j})^\dagger U^{x,y}_ {i,j}\Omega_{i+1,j}\label{eq:gauge_link}\\
 W^{\gamma}_{i,j}&\rightarrow \Omega_{{i,j}}^\dagger 
    W^{\gamma}_{i,j}\Omega_{{i,j}}\,,\label{eq:gauge_flux}
    \end{align}
where $\dagger$ denotes Hermitian conjugation. Hence, the total symmetry group is $U(N)^{N_\text{site}}$.


    The gauge symmetry of the system implies the equivalence relation
    \begin{equation}
        W\sim \Omega^\dagger W\Omega\quad\forall\,\Omega\in\mathrm{U}(N)\label{eq:equivalence}
    \end{equation}
    on the set of Wilson loops at each grid point. According to the spectral theorem, there is exactly one diagonal matrix in each equivalence class (group orbit) with elements in $\mathrm{U}(1)$, up to reordering of the diagonal elements. Therefore, the set of equivalence classes for this relation is given by $\mathrm{U}(1)^N/_{S_N}$, the set of diagonal unitary $N\times N$-matrices up to permutation of the diagonal elements. This fact has been used in the construction of gauge equivariant spectral flows~\cite{boyda2021}. We will exploit it below to prove our universal approximation theorem and simplify the data augmentation process.


\subsection{Learning Chern numbers using ResNets}
\label{sec:res_nets}
Using the fact that $\mathrm{Tr}\log(X) = \log\det(X)$, it follows that the Chern number defined in \eqref{eq:C2} can be written as a sum over $\im\log\det(W)$. As a warmup to predicting Chern numbers, we start by considering the simpler problem of predicting determinants of $N\times N$ real matrices $A$.

We construct a dataset containing $N \times N$ matrices with elements sampled from a uniform distribution on the unit interval $[0, 1]$.
As a baseline, we use a naive multi layer perceptron (MLP) with residual connections $f: \mathbb{R}^{N^2}
\rightarrow \mathbb{R}$, taking the matrix elements as input, and predicting the
determinant value.

The determinant of an $N\times N$ matrix can be expressed as an order $N$ polynomial in the matrix elements. For example, the determinant of a $4\times 4$ matrix $A_{ij}$ is given by the forth order expression
\begin{equation}
	\det(A)=\sum_{i,j,k,l=1}^4\epsilon^{ijkl} A_{1i}A_{2j}A_{3k}A_{4l},
	\label{eq:determinant}
\end{equation}
where $\epsilon$ is the totally antisymmetric Levi-Cevita tensor.

Inspired by the functional form of the determinant in equation \eqref{eq:determinant}, we consider higher order layers with structure 
\begin{equation}
	A^{\text{out}}_{i j} = \sum_{k_1,\dots,k_{2R}}\theta_{i j}{}^{k_1 \dots k_{2 R}}A^{\text{in}}_{k_1 k_2}\dots A^{\text{in}}_{k_{2R-1} k_{2R}},
\end{equation}
where $R$ is the number of factors of $A$ in the layer, and $\theta_{i j}^{k_1 \dots k_{2R}}$ are learnable parameters.
An architecture containing layers with $R=2$ will be referred to as bilinear. As the number of parameters grows quickly with order $R$, we use layers of order $R \leq 4$ and construct higher order terms by composition of multiple layers. For example, two layers of order $R=3$ and $R=2$ will contain terms of order $2, 3$ and $6$ when using residual connections.

Predicted determinants $f(A)$ are evaluated against the target determinant value $\det(A)$ using absolute relative error $\delta = \left|f(A) - \det(A)\right| / \left|\det(A)\right|$.
\begin{table}
\centering
\caption{Relative error $\delta$ for linear MLP and bilinear residual architectures predicting the determinant of real $4 \times 4$ matrices with uniform random elements in $[0,1]$. Standard MLP architecture fails to learn the determinant relation.  \label{tbl:det4x4}}
\vspace{10pt}
\begin{tabular}{llr}
\toprule
Architecture & Layers & $\delta$ \\
\midrule
MLP & 2 & 1.02 \\
 & 3 & 1.02 \\
 \hline

Bilinear & 2 & 0.01 \\
 & 3 & 0.01 \\
\bottomrule
\end{tabular}
\end{table}
Table~\ref{tbl:det4x4} shows the failure of a residual MLP architecture to learn the determinant of $4\times4$ real matrices with uniformly distributed elements on $[0,1]$, whereas a bilinear architecture with layers of order 2 can achieve low relative error.

Even though these higher order layers provide an architecture that is expressive enough for determinants in small dimensions, they quickly run into issues for larger matrices. Figure~\ref{fig:relative_error_matrix_size} shows that for matrix sizes corresponding to band size $\geq 4$, learning the determinant becomes prohibitively hard without better model priors.

 \begin{figure}
 \centering
	\includegraphics[width=0.5\textwidth]{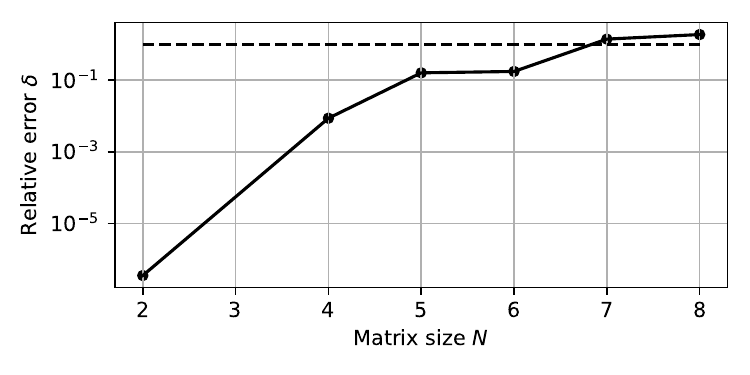}
	\vspace{-10pt}\caption{Best relative error of predicted matrix determinants for polynomial architectures as a function of increasing matrix size. The ablations include layers up to order 4. Dashed line indicates relative error of a mean predictor. Architectures considered include layers of order $\leq 4$, and depth $\leq 4$, containing terms of up to order $16$ by composition.
    \label{fig:relative_error_matrix_size}}
    \vspace{-10pt}
\end{figure}

\section{Network Architecture}
    As demonstrated in the previous section, even hand-crafted polynomial architectures fail to learn a simplified version of the Chern number. Motivated by the performance of our gauge equivariant networks, we propose that this is due to the size of the gauge symmetry present in this problem. Since the Wilson loops at each site can be transformed independently, the total symmetry group is $U(N)^{N_\text{site}}$, a exponentially larger group than even for more traditional group equivariant networks.

    The input data in our network is the set of discretized Wilson loops $W^\gamma_k\in\mathrm{U}(N)$ and all equivariant layers in our setup operator on tensors of this form. The index $\gamma$ counts the number of different orientations of the Wilson loops per site (in 2D, there is only one) for the input and serves as a general channel index in deeper layers. Hence, our layers operate on complex tensors of the shape $N_\text{ch}\times N_\text{sites} \times N\times N$.

    
    
    
\subsection{Gauge equivariant layers}
    
\label{sec:model_structure}
        
    \begin{figure*}
        \centering
        \includegraphics[width=1\linewidth]{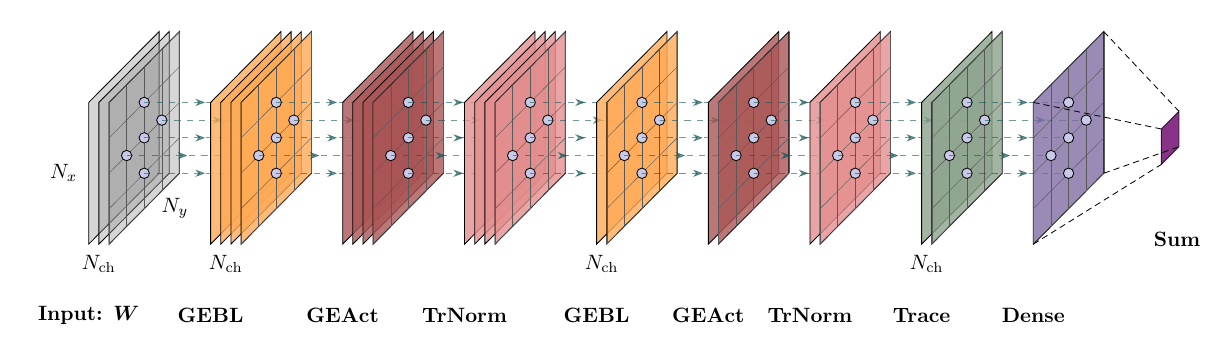}
        \caption{Architecture of GEBLNet. In this figure, the rectangles represent the spatial grid, and the number of layers ($N_\text{ch}$) represents the number of channels ($\gamma$). Each circle represents a site on the grid, and quantities on different sites do not interact with each other, until the last summation on grids.}
        \label{fig:gebl-arch}
    \end{figure*}
   
    Our model is composed of the following equivariant layers which were introduced in \cite{favoni2022} as well as our new gauge equivariant normalization layer.

    \paragraph{GEBL (Gauge Equivariant Bilinear Layers)}
    Given an input tensor $W^\gamma_{k}$, the layer computes a local quantity per site as
    \begin{equation}
        {W'^\gamma_k}= \sum_{\mu,\nu}\alpha_{\gamma\mu\nu}W^\mu_{k} W^\nu_{k}\,,
    \end{equation}
    where $W'$ has $N_\text{out}$ channels and $\alpha_{\gamma\mu\nu}\in\Cb^{N_{\text{in}}\times N_{\text{in}}\times N_{\text{out}}}$ are trainable parameters. Using~\eqref{eq:gauge_flux}, it can easily be checked that this layer is equivariant. In practice, GEBL includes also a linear and a bias term which are obtained by enlarging $W$ with its Hermitian conjugate and the identity matrix.
    In order to merge two branches of the network, two different $W$ can also be used on the right-hand side.

    \paragraph{GEAct (Gauge Equivariant Activation Layers)}
    Given a tensor $W^\gamma_{k}$, the layer maintains channel size $N_\text{in}$ and serves as an equivariant nonlinearity defined by
    \begin{equation}
       {W'^\gamma_k} = \sigma(\tr W^\gamma_k) W^\gamma_k\,,
    \end{equation}
    where $\sigma$ is a usual activation function. In Section~\ref{sec:uat}, we prove a universal approximation theorem for a certain type of $\sigma$. In practice, we use $\sigma(z)=\relu(\re z)$ to avoid gradient vanishing, hence referring to this layer also as GEReLU.

    \paragraph{GEConv (Gauge Equivariant Convolution Layers)} 
    This is the only layer in our setup which introduces interactions between neighboring points. It is also the only layer for which the link variables $U^\gamma_k$ are used. Given a tuple $(U^\mu_{k}, W^\gamma_{k})$, the layer performs a convolution as
    \begin{equation} 
        {W'^\gamma_k}= \sum_{\mu,d}\sum_{\sigma}\omega_{\mu d\gamma \sigma}(U^{d\mu}_{{k} + d\hat{\mu}})^\dagger W^\sigma_{{k} + d\hat{\mu}}U^{d\mu}_{{k}}\,,
    \end{equation}
    where $U^{d\mu}_{{k}} = U^\mu_{k} \dots U^{\mu}_{{k}+(d-1)\hat{\mu}}$ and $d\hat{\mu}$ is the length-$d$ vector in the $\mu$ direction in the lattice. The output $W'$ of this layer has the shape $N_{\text{out}}\times N_{\text{site}}\times N\times N$ and $\omega$ are trainable weights of shape $\text{dim}\times d \times N_\text{out} \times N_\text{in}$. Note that this layer does not update the links. Using the transformations~\eqref{eq:gauge_link} and \eqref{eq:gauge_flux}, one can check that this layer is equivariant as well. When we take a zero convolution kernel size, the layer degenerates into an equivariant linear layer that is completely local.
    

    
    \paragraph{Trace Layer}
    Given a tensor $W^\gamma_{k}$, this layer maintains the channel size and takes the trace of the fluxes as
    \begin{equation} 
        T^\gamma_k =\tr W^\gamma_k\,.
    \end{equation}
    Since the trace is invariant under the transformations~\eqref{eq:gauge_flux}, this layer renders the features gauge invariant. The output has the shape $N_{\text{in}}\times N_{\text{site}}$.
    
    \paragraph{Dense Layer} 
    After the trace layer, we perform a real valued linear layer on $T_k$ as
    \begin{equation}
       T'_k=w_{\text{Re}}\cdot \re(T_k) + w_{\text{Im}}\cdot \im(T_k) + b\,,
    \end{equation}
    where $w_{\text{Re}}$, $w_{\text{Im}}$ and $b$ are trainable parameters. The output features have shape $N_\text{out}\times N_\text{site}$. Note that we only transform gauge invariant features with this layer since it does not respect the gauge symmetry.
    

    \paragraph{TrNorm (Trace Normalization Layers)}
    The bilinear \gebl-layers introduced above quickly lead to training instabilities when stacked deeply, as demonstrated in Section~\ref{sec:experiments} below. To solve this problem, we introduce a novel gauge-equivariant normalization layer which we insert after the nonlinearities. Given an input tensor $W^\gamma_{k}$, this layer maintains the channel size and performs a channel-wise normalization as
    \begin{equation}
        {W'^\gamma_k}= \frac{1}{|\text{mean}_{\gamma}\{\tr W^\gamma_k\}|}W^\gamma_k\,.       
    \end{equation}
    This operation is gauge equivariant since the prefactor is gauge invariant. After the normalization, the output features $W'$ satisfy $\text{mean}_{\gamma}\{\tr W'^\gamma_k\} = e^{i\phi_k}$ for some $\phi_k\in\mathbb{R}$.


    \subsection{Network Architecture}
    We now use the layers introduced in the previous section to construct three different equivariant network architectures.
    
    \paragraph{GEBLNet (Gauge Equivariant Bilinear Network)}
    GEBLNet is a model that only operates locally, i.e.\ the inputs are the fluxes $W^\gamma_k$ alone, and features at different sites only interact with each other in the final sum. It processes the fluxes through repeated blocks of \gebl, \geact, and \trnorm\   layers. The outputs are then aggregated through a \trlayer layer, followed by a dense and a summation over sites to produce the prediction on the Chern number. This is our primary model to study. An example structure is shown in Figure~\ref{fig:gebl-arch}.
    
    \paragraph{GEConvNet (Gauge Equivariant Convolutional Network)}
    GEConvNet is a model that features \geconv\ layers, therefore takes both the links $(U^\mu)$ and the fluxes $(W^\gamma)$ as input. Each \gebl-\geact-\trnorm\  block, similar to that of \geblnet, is paralleled by a \geconv-\geact-\trnorm\  block, with their outputs combined through a subsequent \gebl\  layer. This process is repeated through several iterations, after which the resulting output is passed to the \trlayer- \dense- and sum sequence to produce the final prediction. An example structure is shown in Figure~\ref{fig:conv-arch} in Appendix~\ref{appendix:conv}.
    
    \paragraph{TrMLP (Trace Multilayer Perceptron)}
    As a baseline for two-dimensional grids, we also introduce a Trace Multilayer Perceptron. This model takes only $W^\gamma$ as inputs, extracts a characteristic vector $\left(\tr(W^\gamma_k)^1,\dots,\tr(W^\gamma_k)^N \right)$ for some $N$ per site and then applies an MLP on this vector, followed by a summation over sites for prediction. Its motivation and feasibility will be discussed in Section~\ref{sec:uat}.
    
\section{Theoretical Foundations of the Model}\label{sec:uat}
In this section we will present a universal approximation theorem for our models. We focus on \geblnet, whose inputs are solely $W_k$. 
Recalling the equivalence relation~\eqref{eq:equivalence}, the local quantity is in fact a class function on the gauge group $U(n)$, which is defined as follows:
\begin{definition}
    A (complex) class function on a Lie group is a function $f:G\ra \Cb$ such that:
    \begin{equation}
        f(ghg^{-1})=f(h),\qquad \forall g, h \in G
    \end{equation}
\end{definition}
For compact Lie groups any continuous function on $G$ is automatically square integrable due to the finiteness of the Haar measure. Hence, the space of class functions is a subset of $L^2(G)$. We will henceforth denote this subspace by $L^2_{class}(G)$.

We now present our main theoretical result, which shows our model's capability to learn an arbitrary class function.
\begin{theorem}[Universal Approximation Theorem]
    For a compact Lie group $G$, and with the nonlinearity $\sigma$ in GEAct taking the form $\tils\circ \re$, where $\sigma$ is bounded and non-decreasing, GEBLNet could approximate any class function on $G$.
    \label{UAT}
\end{theorem}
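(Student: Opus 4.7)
The plan is to reduce the theorem to the classical universal approximation theorem for shallow networks with non-polynomial activations, via the Peter--Weyl / spectral characterization of class functions. Since every compact Lie group admits a faithful finite-dimensional unitary representation, I may assume $G\subseteq \mathrm{U}(N)$. By the spectral theorem every $W\in G$ is conjugate to a diagonal unitary, so continuous class functions on $G$ correspond to continuous symmetric functions on a closed subset of the torus of eigenvalues. By the fundamental theorem of symmetric functions (Newton's identities) together with Stone--Weierstrass, such symmetric functions are uniformly approximated by continuous functions of the power sums $p_k(W)=\tr(W^k)$ for $k=1,\dots,N$. It therefore suffices to uniformly approximate any continuous $F\colon K\to\Rb$ on the compact set $K\subset\Rb^{2N}$ parametrised by $x=(\Re\tr W^k,\,\Im\tr W^k)_{k=1}^{N}$.

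Next I would verify that \gebl, enlarged by its bias (identity) and Hermitian-conjugate channels, realises arbitrary matrix polynomials in $W$ and $W^\dagger$: starting from $\{I,W,W^\dagger\}$, each \gebl\ step enlarges the set of available monomials by bilinear products, and iterating gives any $P(W,W^\dagger)$ in a chosen output channel. Consequently, for every real affine functional $L(x)=\sum_k(\alpha_k\Re\tr W^k+\beta_k\Im\tr W^k)+c$ one can engineer a channel carrying a matrix $M(W)$ with $\tr M(W)=L(x)\in\Rb$ (by choosing complex \gebl-coefficients that cancel the imaginary contributions), while $M(W)$ itself remains nonzero.

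The key step is then to reassemble these pieces into a shallow network on the trace variables. Applied to such a channel, \geact\ produces $\tils(\Re\tr M)\,M=\tils(L(x))\,M$, and a subsequent linear combination (via a \gebl\ with only its linear and bias terms active, followed by \trlayer\ and \dense) yields the scalar $\sum_k \eta_k\,\tils(L_k(x))\,\tr M_k=\sum_k\eta_k\,L_k(x)\,\tils(L_k(x))$ with freely choosable real $\eta_k$ and independent affine $L_k$. Setting $\phi(t):=t\,\tils(t)$, this is a shallow network $\sum_k\eta_k\,\phi(L_k(x))$. Since $\tils$ is bounded and non-decreasing but (in the nontrivial case) not constant, it is non-polynomial, and hence so is $\phi$. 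The universal approximation theorem of Leshno--Lin--Pinkus--Schocken then guarantees density of such sums in $C(K)$, which combined with Step~1 yields the theorem.

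The main obstacle I anticipate lies in this third step: the nonlinearity in \geact\ is rigidly applied to $\Re\tr W^\gamma$ and multiplied by $W^\gamma$ itself, so one must argue that by using multiple channels the independent affine functionals $L_k$ can be produced as traces of distinct channels in parallel and that $\tils(L_k(x))$ is then extracted cleanly through a single subsequent \gebl, without the $M_k$ re-entangling in unwanted ways. Care is also required for the \trnorm\ layer, which rescales by a gauge-invariant but $W$-dependent factor; since this factor is bounded away from zero on the relevant compact input set and can be absorbed into the trainable parameters of the following layers, it does not affect density, but this absorption should be made explicit in the final write-up.
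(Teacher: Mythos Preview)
Your high-level strategy matches the paper's: reduce to a shallow network on the power-sum traces $p_k=\tr W^k$ via Peter--Weyl and Newton's identities, then invoke a classical universal approximation theorem. The paper does exactly this, proving (Corollary~\ref{cor:trgn}) that functions of $(\tr g,\dots,\tr g^M)$ are dense in $L^2_{\text{class}}(G)$ and then showing that a $k$-block GEBLNet is dense in the functions of $(p_1,\dots,p_{2^k})$.

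The genuine gap in your proposal is the second paragraph. You write that ``iterating \gebl\ gives any $P(W,W^\dagger)$ in a chosen output channel'', but in GEBLNet every \gebl\ is followed by a \geact, so there is no pure stack of bilinear layers available. After one packed layer a channel that was, say, $\alpha W^2+\beta W$ becomes $\tils\bigl(\Re(\alpha p_2+\beta p_1)\bigr)\cdot(\alpha W^2+\beta W)$, and a further \gebl\ multiplies such objects together: the coefficients in front of the monomials $g^t$ are no longer constants but functions of lower-order traces. Your anticipated obstacle (``the $M_k$ re-entangling'') is therefore not the main issue; the real difficulty is that the interleaved \geact\ layers destroy the clean polynomial structure you want before the final nonlinearity.

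The paper resolves this by an induction on the number of packed layers (Lemma~\ref{fundamentallemma}) whose key technical ingredient is exactly what your write-up is missing: since $\tils$ is bounded and non-decreasing, one can scale a channel by a large constant $M$ so that $\tils(M\cdot a\cdot p)$ is uniformly close to $\sup\tils$ on the compact set $K$; the intermediate \geact\ then acts as an approximate constant multiplier, and the scaling is undone in the next \gebl. Under this saturation trick the induction hypothesis guarantees that the polynomial coefficients at depth $n$ can approximate arbitrary constants, so that the $(n{+}1)$-th packed layer behaves like the one-hidden-layer network you describe, now on $(p_1,\dots,p_{2^{n+1}})$. Without this step your construction of channels $M_k$ with $\tr M_k=L_k(x)$ does not go through, and you should insert the saturation argument explicitly before appealing to Leshno--Lin--Pinkus--Schocken.
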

The full proof is given in Appendix~\ref{appendix:proof}. Here we present a brief sketch of it. To this end we begin by introducing some properties of class functions.
\begin{theorem}
    The space of symmetric polynomials $\{s_k\}$ over eigenvalues $\{\lambda_k\}$ forms an orthonormal basis of $L^2_\text{class}(G)$.  
    \label{PW}
\end{theorem}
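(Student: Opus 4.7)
The plan is to derive Theorem~\ref{PW} from the Peter–Weyl theorem together with Weyl's character formula. By Peter–Weyl, matrix coefficients of irreducible unitary representations form an orthonormal basis of $L^2(G)$ for any compact Lie group $G$. Taking traces and invoking Schur orthogonality, the characters $\chi_V$ of pairwise inequivalent irreducibles form an orthonormal family in $L^2_\text{class}(G)$. Completeness in the class-function subspace then follows because any class function orthogonal to every $\chi_V$ must vanish: its projection onto each isotypic block of $L^2(G)$ is a scalar multiple of $\chi_V$ by Schur's lemma, and all such scalars are zero by hypothesis.

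For the case relevant to our setup, $G=U(N)$, Weyl's integration formula identifies $L^2_\text{class}(G)$ with the space of $S_N$-symmetric $L^2$ functions on the maximal torus $T=U(1)^N$ equipped with the weighted measure $|\Delta(\lambda)|^2 d\lambda$, where $\Delta(\lambda)=\prod_{i<j}(\lambda_i-\lambda_j)$ is the Vandermonde determinant and $\lambda_1,\ldots,\lambda_N$ are the eigenvalues. Under this identification, Weyl's character formula expresses each irreducible character as a Schur polynomial $s_\mu(\lambda_1,\ldots,\lambda_N)$, indexed by a partition $\mu$, which is a symmetric polynomial in the eigenvalues. Combining with the previous step yields that $\{s_\mu\}$ is an orthonormal basis of $L^2_\text{class}(U(N))$, which is the assertion of the theorem. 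Since the Schur polynomials additively generate the full ring of symmetric polynomials in the $\lambda_k$, the latter is dense in $L^2_\text{class}(G)$, giving the ``space of symmetric polynomials'' formulation used in the statement.

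The main subtlety, rather than a genuine obstacle, is measure-theoretic bookkeeping: orthonormality is with respect to the Haar measure on $U(N)$, which, after Weyl integration, becomes the weighted measure $|\Delta|^2 d\lambda$ on $T$. This weighting is precisely the one that makes Schur polynomials orthonormal, so no separate calculation is needed beyond citing the classical Schur character orthogonality. I would organise the write-up as a short chain of implications: Peter–Weyl $\Rightarrow$ characters form an ONB of $L^2_\text{class}(G)$; Weyl integration $+$ Weyl character formula $\Rightarrow$ those characters are Schur polynomials in the eigenvalues for $G=U(N)$; conclude that symmetric polynomials span a dense subspace whose distinguished orthonormal basis is $\{s_\mu\}$.
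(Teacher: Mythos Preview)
Your proposal is correct and follows essentially the same approach as the paper: the paper's proof is a single sentence invoking the Peter--Weyl theorem and the expansion of class functions in irreducible characters, and your argument is precisely this, fleshed out with the additional step (Weyl's character formula for $U(N)$) that identifies the characters with Schur polynomials in the eigenvalues. The paper leaves that identification implicit, so your write-up is if anything more complete than what appears there.
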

\begin{proof} Follows directly from the Peter-Weyl theorem and the expansion of class functions in terms of irreducible characters.
\end{proof}
Furthermore, consider the set of polynomials over eigenvalues of $g$: $\{p_k(\lambda_1,\dots,\lambda_n)=\sum_i \lambda_i^k\}$. In our setting these polynomials can be identified with the set of traces of group elements of the form $\tr g^k$. Using Newton's identities for symmetric polynomials,
    \begin{equation}
        ke^k = \sum_{j=1}^k(-1)^{j-1}e_{k-j}p_j\,,\quad e_k=\hspace{-2pt}\sum_{\sum_{n}k_n=k}\prod\lambda_i^{k_i},
    \end{equation} 
one may deduce the following
\begin{corollary}\label{cor:trgn}
    The space $\bigcup_M\{f(\tr g, \dots, \tr g^M)\}\cap L^2(G)$ is dense in $L^2_\text{class}(G)$.
\end{corollary}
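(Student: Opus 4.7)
The plan is to combine Theorem~\ref{PW} with the Newton identities displayed in the excerpt to reduce the statement to a purely algebraic fact about symmetric polynomials. Concretely, Theorem~\ref{PW} tells me that the finite linear span of the symmetric polynomials $\{s_k\}$ in the eigenvalues of $g\in G$ is already dense in $L^2_\text{class}(G)$. It therefore suffices to realise each such symmetric polynomial as a polynomial function of the traces $\tr g,\dots,\tr g^M$ for some $M$, because any such polynomial is automatically an element of $\bigcup_M\{f(\tr g,\dots,\tr g^M)\}$.

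The bridge between the two pictures is the identification $\tr g^k = p_k$. Writing $g\in U(n)$ in its (spectral-theorem) diagonal form with eigenvalues $\lambda_1,\dots,\lambda_n\in U(1)$, the eigenvalues of $g^k$ are $\lambda_i^k$, so $\tr g^k=\sum_i\lambda_i^k=p_k$ is exactly the $k$-th Newton power sum. Using the Newton identities as written in the excerpt, I would express each elementary symmetric polynomial $e_k$ recursively as a polynomial in $p_1,\dots,p_k$. The fundamental theorem of symmetric polynomials then implies that every symmetric polynomial in $\lambda_1,\dots,\lambda_n$ is a polynomial in $p_1,\dots,p_n$, hence a polynomial in $\tr g,\dots,\tr g^n$. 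Thus each basis element $s_k$ from Theorem~\ref{PW} is realised as a particular choice of $f(\tr g,\dots,\tr g^M)$ with $M=n$ and $f$ a polynomial.

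To close, I would verify the $L^2$ condition, which is essentially automatic: $g\mapsto\tr g^k$ is continuous on $G$, any polynomial in continuous functions is continuous, and continuous functions on the compact group $G$ are bounded, hence square-integrable with respect to the finite Haar measure. Consequently the symmetric-polynomial span sits inside $\bigcup_M\{f(\tr g,\dots,\tr g^M)\}\cap L^2(G)$, and density in $L^2_\text{class}(G)$ follows from Theorem~\ref{PW}.

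I do not anticipate a real obstacle; the argument is essentially a packaging of Newton's identities together with the Peter--Weyl basis. The only mild point of care is to be explicit about the chain of inclusions and to note that for a fixed $n$ a single finite $M=n$ suffices to express every $s_k$, even though the statement invokes a union over $M$; this is what makes the density genuine rather than merely formal.
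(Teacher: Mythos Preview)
Your proposal is correct and follows essentially the same route as the paper: the paper likewise derives the corollary directly from Theorem~\ref{PW} together with Newton's identities, identifying the power sums $p_k$ with $\tr g^k$. Your additional remarks on the $L^2$ condition and on $M=n$ sufficing are welcome clarifications but do not depart from the paper's argument.
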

We are now ready to  sketch the proof of Theorem~\ref{UAT}.
\begin{proof}[Sketch of Proof]
    Consider a network with $M$ \gebl\  layers and sufficient width. One can show that the output of such a network can approximate any function of the form
    \begin{equation}w_{i}\sigma(\re \sum_{j=0}^{2^M}\alpha_{ij}\tr g^i)(\sum_{j=0}^{2^M}\alpha_{ij}\tr g^i)+b.
\label{UATfunction}
    \end{equation}
    However, this is equivalent to a one-layer MLP on the polynomials $(\tr g, \dots, \tr g^{2^M})$. Therefore it can approximate any function in $\{f(\tr g, \dots, \tr g^{2^M})\}\cap L^2(G)$. By Corollary~\ref{cor:trgn}, we conclude that eq.(\ref{UATfunction}) can  approximate any functon in $L^2_{class}(G)$. \qedhere
\end{proof}
We refer to Appendix~\ref{appendix:proof} for the complete proof.

\section{Experiments}
\label{sec:experiments}
To assess the performance of our gauge-equivariant neural network, we conducted experiments on synthetic datasets with varying grid sizes and distributions. Unless specified, the dataset is based on $2D$ grids. In this case, sample data shall have the form $(U^x_k, U^y_k, W_k)_k$.
\subsection{Data Generation}

\paragraph{General Dataset} The data generation pipeline consists of two main steps. Given a grid size \( N_x \times N_y \), we first generate random link variables using the following algorithm:

\begin{enumerate}
    \item For \( \mu = x, y \), draw \( A^\mu_k \sim \mathcal{N}(0, 1)^{N \times N} \).
    \item Perform a QR decomposition on \( A^\mu_k \), decomposing it into the product of a unitary matrix \( U^\mu_k \) and a semi-definite matrix \( \Sigma^\mu_k \), \( A^\mu_k = U^\mu_k \Sigma^\mu_k \).
    \item Use \( U^\mu_k \) as the link variable for site \( k \) in direction \( \mu \).
\end{enumerate}

It can be shown that the distribution of the links generated in this way is uniform on \( \mathrm{U}(N) \), i.e.\ the random variables \( U \) and \( gU \) are identically distributed for all \( g \in \mathrm{U}(N) \). See Appendix~\ref{appendix:data} for details.

In the next step, we compute the fluxes \( W_k \) using \eqref{eq:fluxdef} and the discrete Chern number \( \tilde{C} \) using \eqref{eq:C2}. Ultimately, this yields a dataset of data-value pairs \( ((U_k^\mu, W_k^\gamma), \tilde{C}) \). We generate the training samples continuously during training to avoid overfitting.

\paragraph{Diagonal Dataset}
For some of our experiments, we require control over the distribution of the Chern numbers in our training data. To this end, we employ a different data generation strategy. Due to the invariance of our model under gauge transformations, training samples which lie in the same gauge orbit are equivalent in the sense that the parameter updates they induce are the same. This implies that we can select an arbitrary element along the gauge orbit for training. As discussed in Section~\ref{sec:gauge_sym}, there is always a diagonal matrix with $\mathrm{U}(1)$-valued components in the orbit. Therefore, by training on these matrices in $\mathrm{U}(1)^N$ and manipulating the distribution of the diagonal values, we can generate datasets with different distributions of Chern numbers. Note that networks trained on these datasets of course generalize to non-diagonal Wilson loops since they are gauge invariant. Detailed discussions are provided in Appendix~\ref{appendix:data}.

\subsection{Training and Evaluation}

We adopt two loss functions for our training: the global loss $L_{\mathrm{g}}$ and the standard deviation loss $L_{\mathrm{std}}$. Specifically, given the network $f(W)$ and the Chern number $\tilde{C}$, $L_{\mathrm{g}}$ calculates $\|f(W)-\tilde{C}\|_1$. Meanwhile, $L_{\mathrm{std}}$ evaluates the entrywise standard deviation of the network output after the dense layer, denoted as $(g(W_k))_k$, namely $\|\max (\{\mathrm{std}(g(W_k)_k),\delta\}) - \delta\|_1$, where $\delta$ is a hyperparameter, set to 0.5 by default.
The standard deviation loss is necessary to prevent the model from collapsing to only zero outputs, since it forces the model to output locally differed quantities. This is particularly relevant for the training on trivial topologies only, as described below.
The total loss function $L_\text{total}$ adds these two terms, $L_\text{total}=L_{\mathrm{g}}+L_\mathrm{std}$.


For evaluation, we compute the accuracy by rounding the network output $f(W)$ to the nearest integer and comparing it with the Chern number $\tilde{C}$, unless otherwise specified.
\paragraph{Baseline Model Configuration}
For the main architecture \geblnet, we set a baseline model configuration, whose \gebl layers and hyperparameters are listed in Table~\ref{tab:baseline} in the appendix.

\subsection{Experimental Results}\label{sec:expr-results}
\begin{table}[!]
        \centering
        \caption{Accuracy of GEBLNet trained and evaluated on a $5^2$ grid. }
        \label{tab:bands-acc}
        \vspace{10pt}
        \begin{tabularx}{0.5\textwidth}{cccccc}
            \toprule
             Bands& 4& 5&6&7&8 \\
             \midrule
             Accuracy ($\%$)&$95.9$& $94.0$& $93.8$& $91.7$& $52.5$\\
             \bottomrule
        \end{tabularx}
\end{table}
\begin{figure}[!]
        \centering
        \includegraphics[width=0.45\textwidth]{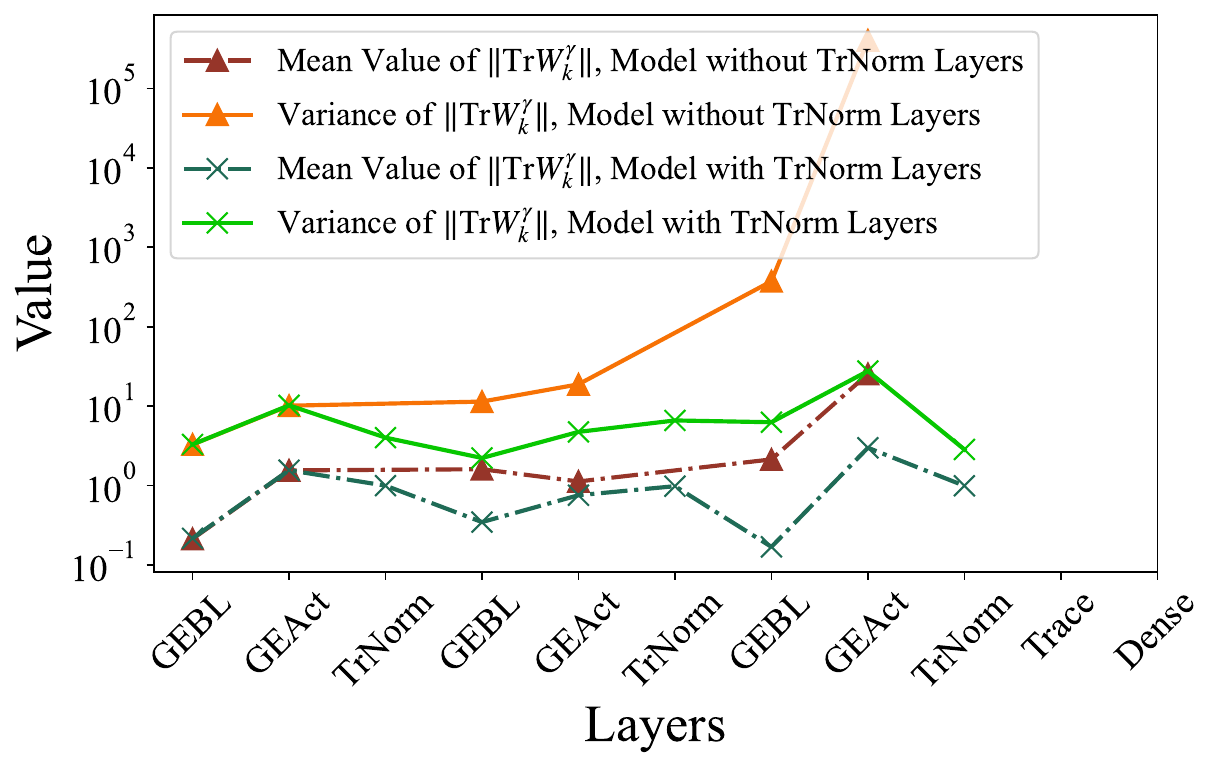}
        \vspace{-10pt}\caption{Comparison of statistics of $\|\tr W'^\gamma_k\|$ across each layer, between two training runs on a $5\times 5$ grid, with $4$ filled bands. The two runs have identical configurations, except for the implementation of \trnorm\  Layers.}
        \label{fig:stats-trnorm}
        \vspace{-10pt}
\end{figure}
\begin{figure*}
    \centering 
    \includegraphics[width=0.48\textwidth]{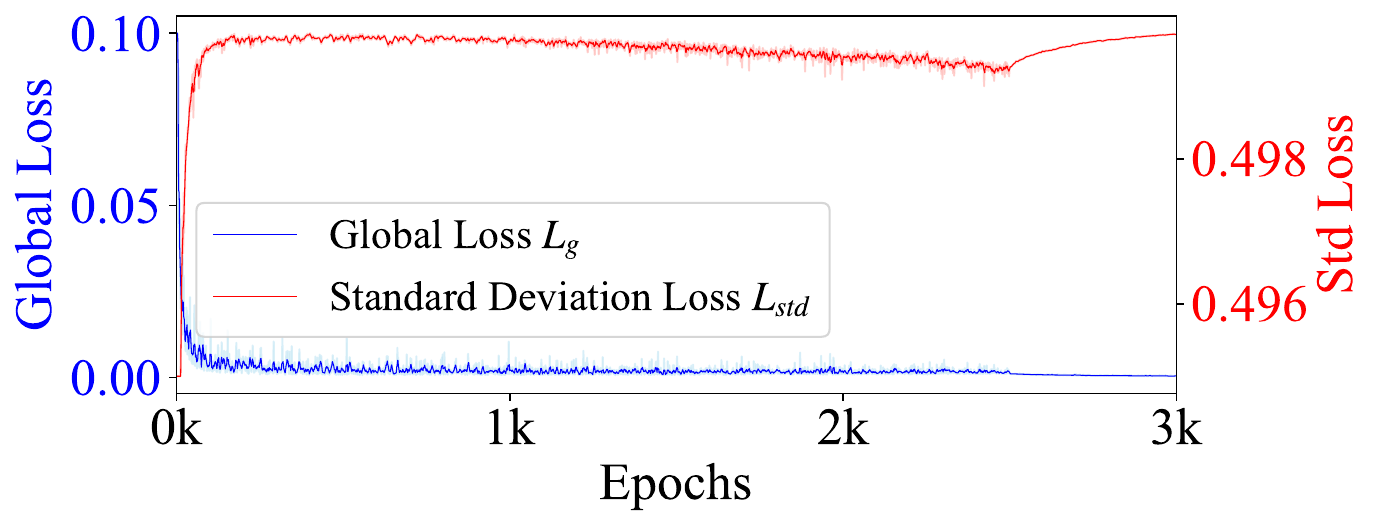}    \includegraphics[width=0.48\textwidth]{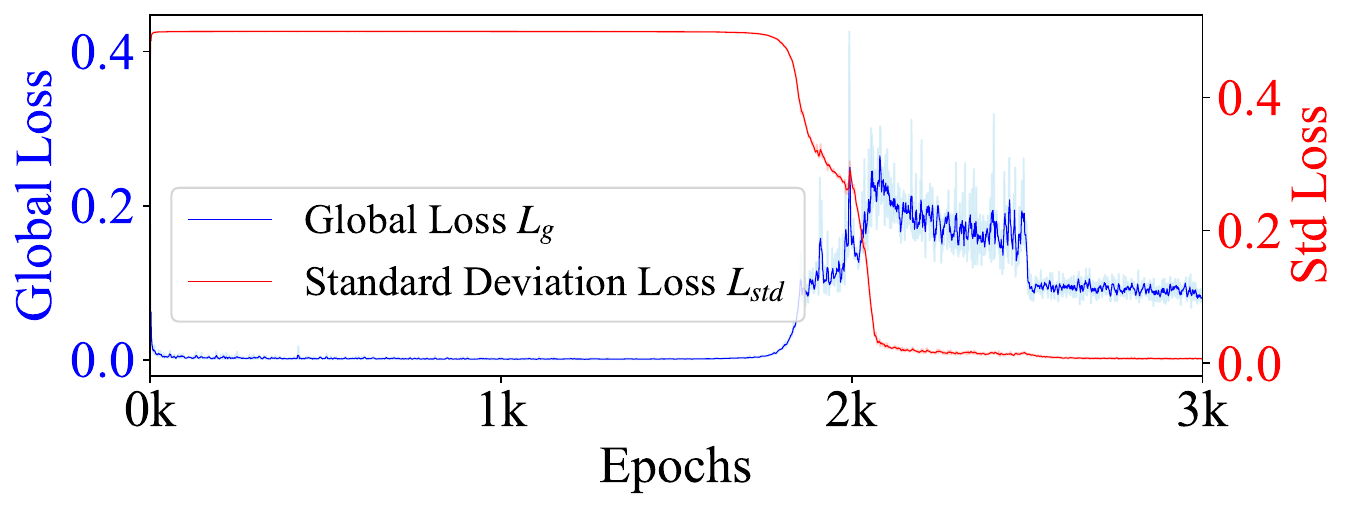}
    \vspace{-10pt}
    \caption{Comparison of global and standard deviation loss on validation data between the same two runs learning on only zero Chern numbers, as shown in Figure~\ref{fig:stats-trnorm}. The former, without \trnorm layers, collapses to zero local quantities everywhere, hence having a lower $L_{\mathrm{g}}$ on trivial samples than the latter with \trnorm\ layers. Nevertheless, the former could not generalize to nontrivial cases.
    In contrast, the latter, with \trnorm layers, succeeds in learning global quantities and local differences simultaneously. }
    \label{fig:loss-trivial}
    \vspace{-10pt}
\end{figure*}
\paragraph{Model Comparison}
For a basic model comparison, we train \geblnet, \geconvnet, and \trmlp\  on 2D grids to learn Chern numbers by training on non-diagonal data. We find that \geblnet  outperforms the other models in both accuracy and robustness, see Figure~\ref{fig:complexity-acc} in  Appendix~\ref{appendix:conv}.

Using a benchmark grid size of $5 \times 5$ and $N=4$ filled bands, \geblnet\  achieved approximately $95\%$ accuracy across different seeds, demonstrating strong robustness. In contrast, \geconvnet\  struggled to learn correct results with positive kernel sizes, likely due to redundant information. We also tested a degenerate \geconvnet with kernel size $0$ (a local network) which performed better than its non-local counterpart, but it remained less robust than \geblnet. \trmlp\ was highly sensitive to initialization, frequently encountering gradient explosions, and most successful runs exhibited low accuracy.

A complexity-accuracy comparison in Figure~\ref{fig:complexity-acc} in the appendix demonstrates the balance achieved by the baseline model, which performed consistently well while maintaining efficiency. Additionally, tests on increasing band sizes (Table~\ref{tab:bands-acc}) show that the baseline model effectively learns Chern numbers up to 7 bands, retaining high accuracy and robustness.

\begin{figure}
        \centering 
        \includegraphics[width=0.3\textwidth]{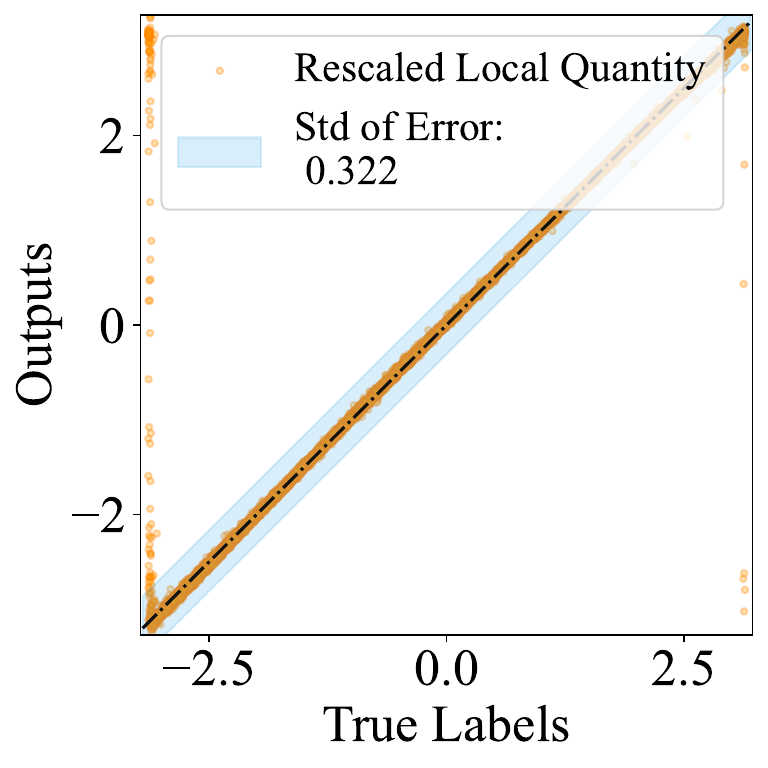}
        \vspace{-10pt}\caption{Comparison of rescaled local outputs with local true values. Points closer to the reference line $y = x$ indicate higher accuracy in capturing local quantities.}
        \label{fig:local-quantity}
\end{figure}
\begin{figure}
        \centering
        \includegraphics[width=0.45\textwidth]{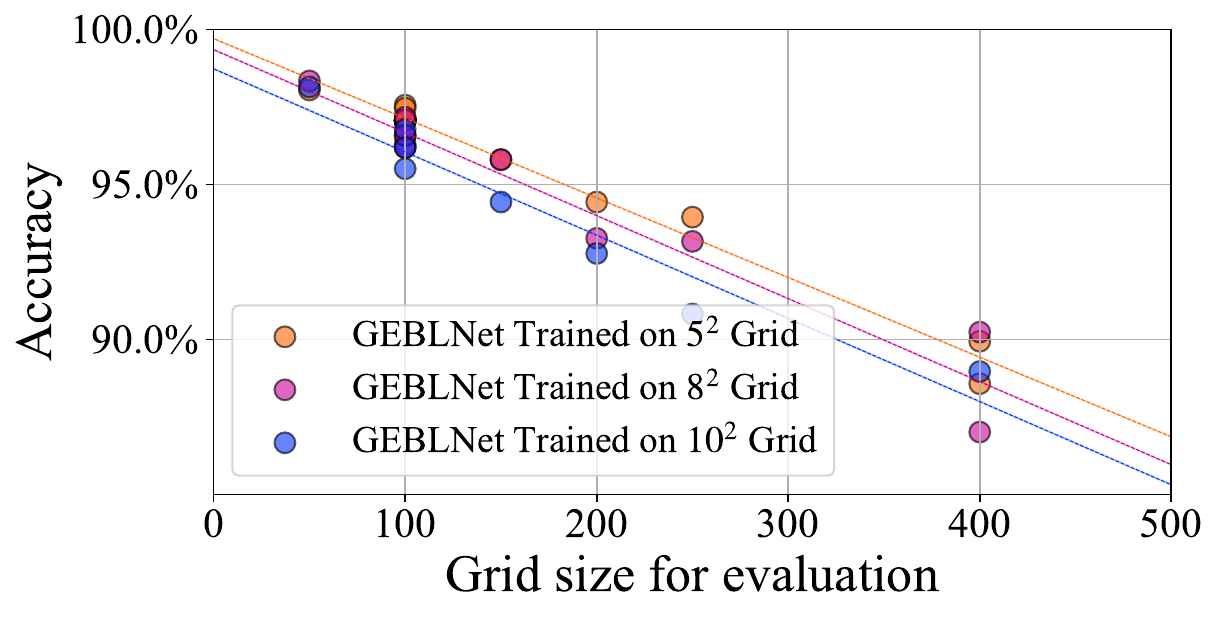}
        \vspace{-10pt}
        \caption{Comparison of model accuracy across different grid sizes. Each run, represented by markers of the same color, has identical configurations, but is trained on grids of a different size. Each line represents a linear regression on the corresponding run.}
        \label{fig:grid-acc}\vspace{-10pt}
\end{figure}

\paragraph{Training on Trivial Topologies}
In order to test the generalization properties of our model, we train exclusively on topologically trivial samples. To achieve this, non-trivial samples were manually filtered out during data generation, turning $L_{\mathrm{g}}$ effectively into $\|f(W)\|_1$. 

A naive \geblnet\  model without \trnorm layers can learn the Chern number for up to 3 bands but fails for 4 bands and above, mostly outputting zero local quantities except for a few random seeds. Analyzing the statistics of output traces per layer (Figure~\ref{fig:stats-trnorm}) reveals that variance accumulates across layers, reaching $10^5$ after the final \geact\  layer, causing numerical instability and vanishing gradients.

Introducing \trnorm\  layers mitigates this issue, stabilizing the variance and enabling the model to learn Chern number of systems with more than 4 bands. Figure~\ref{fig:loss-trivial} compares the loss curves between models with and without \trnorm\  layers, demonstrating the effectiveness of this modification.

Furthermore, training solely on trivial samples limits the model to learning the Chern number up to a global, sample independent, rescaling factor, i.e.\ $f(W) \approx k\tilde{C}$ for some $k$. To evaluate on general samples, we compute the rescaling factor $R_\text{scale} = \text{mean}(\tilde{C})/\text{mean}(f(W))$ using a large set of non-trivial samples and scale the output as $R_\text{scale}f(W)$. The training results demonstrated prediction accuracy comparable to that of training on general datasets, as detailed in Table~\ref{tab:diag-acc} in Appendix~\ref{appendix:data}, we obtain an accuracy of approximately $94.1\%$ on four bands, comparable to $95.9\%$ for training on data which includes non-trivial Chern numbers. Meanwhile, Figure~\ref{fig:local-quantity} shows the model's ability to capture local quantities accurately.
    
\paragraph{Larger grids}
Due to its local structure, \geblnet\  can process samples of arbitrary grid sizes. We evaluate its generalization abilities by testing samples on larger grids using a baseline model trained on smaller grids. The results, shown in Figure~\ref{fig:grid-acc}, indicate that our models show excellent generalization ability to larger grid sizes. The moderate accuracy decrease we observe is approximately linearly with the number of grid sites, likely due to accumulating errors. Training on larger grids slightly improves performance but comes at a considerable computational cost.


\paragraph{Learning higher-dimensional Chern numbers}We extended the task to learning Chern numbers to $4D$ grids, whose definitions are significantly more complex than the $2D$ case (see Appendix~\ref{appendix:chern_highdim}). Furthermore, instead of a single flux $W_k$, there are $C^2_4=6$ Wilson loops per site in $4D$. Since for higher dimensions, the discrete approximation to the Chern number is not necessarily an integer, we cannot use the accuracy for evaluation. Therefore, we use the MAE (global loss $L_{\mathrm{g}}$) instead, see Figure~\ref{fig:loss-highdim} in Appendix~\ref{appendix:conv}. With an MAE of around $0.25$, our models can predict these higher-dimensional Chern numbers well within rounding errors.
\begin{figure}
        \centering
        \includegraphics[width=0.3\textwidth]{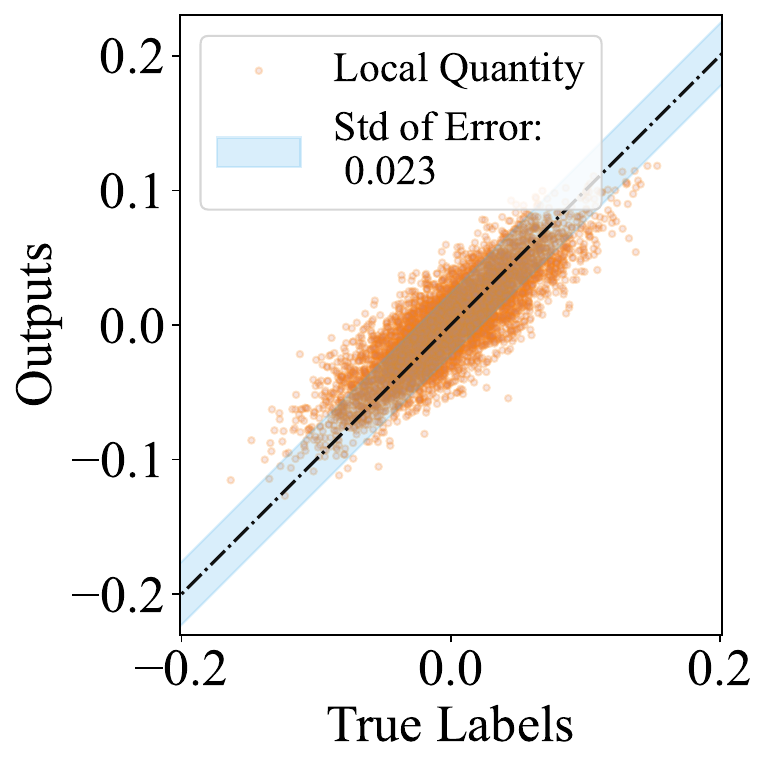}
        \vspace{-10pt}
        \caption{Comparison of rescaled local outputs with local true values.}
        \label{fig:local-quantity-highdim}
    \end{figure}
Figure~\ref{fig:local-quantity-highdim} demonstrates the predictions of local quantities, showing good agreement with the targets.


\newpage
\newpage

\section*{Acknowledgments}
We want to thank David Müller and Daniel Schuh for inspiring discussions.
The work of J.G.\ and D.P.\ is supported by the Wallenberg AI, Autonomous Systems and Software Program
(WASP) funded by the Knut and Alice Wallenberg (KAW) Foundation. M.G.\ acknowledges support from KAW through the Wallenberg Centre for Quantum Technology (WACQT). The computations were enabled by resources provided by the National Academic Infrastructure for Supercomputing in Sweden (NAISS), partially funded by the Swedish Research Council through grant agreement no. 2022-06725. 

\renewcommand*{\bibfont}{\normalfont\footnotesize}
\printbibliography

\newpage
\onecolumn
\appendix
\section{Higher Order Chern Numbers}
    In~\ref{sec:chern}, we defined in~\eqref{eq:C2} for two dimensional Brillouin zones the non-abelian Berry curvature as
    \begin{equation*}
        \mathcal{F} = \partial_{k_x} \mathcal{A}_y (k) - \partial_{k_y} \mathcal{A}_x (k) + [\mathcal{A}_x (k), \mathcal{A}_y (k)]\,.
    \end{equation*}
    For $2n$ dimensional Brillouin zones, which are topologically equivalent to $\Rb^{2n}/_{\Zb^{2n}}$, there are $P^2_{2n}$ different oriented planes, i.e. for every two directions $k_\mu,k_\nu$, there is a planar flux
    \begin{equation}
        W_{k}^{\mu,\nu}=U_{k}^\mu U_{k+\hat{\mu}}^\nu (U_{k+\hat{\nu}}^\mu)^\dagger (U_{k}^\nu)^\dagger.
    \end{equation}
    It is easy to verify that $W_{k}^{\mu,\nu}=(W_{k}^{\nu,\mu})^\dagger$.
     Similarly, there is a planar curvature
    \begin{equation}
        \mathcal{F}_{\mu,\nu} = \partial_{k_\mu} \mathcal{A}_\nu (k) - \partial_{k_\nu} \mathcal{A}_\mu (k) + [\mathcal{A}_\mu (k), \mathcal{A}_\nu (k)]\,.
    \end{equation}
    Where $\mathcal{A}_\mu$ is similarly defined as in~\eqref{eq:A1}.
    We showed in \ref{sec:chern} the definition of Chern numbers on a $2D$ Brillouin zone in~\eqref{eq:C1}. For a $2n$ dimensional Brillouin zone, a $n_{th}$ order Chern number is defined as
    \begin{equation}
        C_n = \frac{1}{ n!\left(2 \pi i\right)^n}\int_{BZ} \, \, \mathrm{Tr} \, [\mathcal{F}(k)^n]\,d^{2n} k\,.
    \end{equation}
    Here, $\mathcal{F}(k)^n$ represents a wedge product of differential forms $\mathcal{F}_{\mu,\nu}(k)\,d k_\mu\, dk_\nu$, which could be written equivalently as 
    \begin{equation}
        \frac{1}{2^n}\sum_{\mu_1,\mu_2,\dots,\mu_{2n-1},\mu_{2n}}\epsilon_{\mu_1,\mu_2,\dots,\mu_{2n-1},\mu_{2n}}\prod_{t=1}^n\mathcal{F}_{\mu_{2t-1},\mu_{2t}}(k).
    \end{equation}
    It could be shown that $C$ is always an integer, $\forall n\geq 1$.
    
    In practice, since the fluxes $W_{k}^{\mu,\nu}$ is an approximation of $\exp(\mathcal{F}_{\mu,\nu})$, we calculate the discrete version of higher order Chern numbers with the following equation
    \begin{equation}
        \tilde{C}_n=\frac{1}{n!(2\pi i)^n2^n}\sum_k\sum_{\mu_1,\mu_2,\dots,\mu_{2n-1},\mu_{2n}}\tr\epsilon_{\mu_1,\mu_2,\dots,\mu_{2n-1},\mu_{2n}} \prod_{t=1}^n \log W^{\mu_{2t-1},\mu_{2t}}_k.\label{eq:discrete_chern}
    \end{equation}
    When taking $n=1$, Equation~\eqref{eq:discrete_chern} coincides with~\eqref{eq:C2}. Since $\log$ function is analytical, which means could be represented by a power series, and $(\Omega^\dagger W \Omega)^n=\Omega^\dagger W^n \Omega$, we have
    \[ \tilde{C}(W^{\mu_{2t-1},\mu_{2t}}_k)=\tilde{C}(\Omega^\dagger W^{\mu_{2t-1},\mu_{2t}}_k\Omega),\ \forall \Omega\in U(N)\]
    This discretized Chern number is an integer only in the continuum limit, therefore we use the MAE
(global loss $L_{\mathrm{g}}$) instead for evaluation.
\label{appendix:chern_highdim}
\newpage
\section{Data Generation}\label{appendix:data}
    \subsection{Uniform Distribution on $U(N)$ with QR Decomposition}
        In the experiments we generate $U(N)$ with QR Decomposition on a matrix $A\in \Cb^{N\times N}$, whose entries have i.i.d. $\mathcal{N}(0,1)$ real and imaginary parts. We assume the algorithm to generate $U$ from $A$ is single-valued, i.e. $U=f(A)$ for some function $f:\Cb^{N\times N}\ra U(N)$. We show the ''left invariance'' of the random variable $U$. 
        \begin{proposition}
            $U$ and $gU$ are identically distributed, $\forall g\in G$.
        \end{proposition}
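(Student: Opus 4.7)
The plan is to exploit two independent facts and combine them: first, that the distribution of the Gaussian matrix $A$ is invariant under left multiplication by any unitary $g$, and second, that the QR map $f$ is equivariant in the sense that $f(gA) = g\,f(A)$ for all $g \in U(N)$. Once both are in hand, the conclusion follows by a single change-of-variables argument.

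For the first fact, I would write down the joint density of $A$ with respect to Lebesgue measure on $\mathbb{C}^{N\times N}$; since the real and imaginary parts of all entries are i.i.d.\ $\mathcal{N}(0,1)$, this density is proportional to $\exp\!\bigl(-\tfrac{1}{2}\|A\|_F^2\bigr)$. The Frobenius norm satisfies $\|gA\|_F^2 = \operatorname{tr}(A^\dagger g^\dagger g A) = \operatorname{tr}(A^\dagger A) = \|A\|_F^2$ for any $g \in U(N)$, and $A\mapsto gA$ has unit Jacobian on $\mathbb{C}^{N\times N}$ (regarded as a real vector space) because $g$ acts by an orthogonal transformation on the real and imaginary components. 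Hence $A$ and $gA$ are identically distributed.

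For the second fact, I would note that if $A = U\Sigma$ with $U \in U(N)$ and $\Sigma$ upper triangular with non-negative diagonal entries, then $gA = (gU)\Sigma$ is a decomposition of the same form, since $gU \in U(N)$ and $\Sigma$ is unchanged. By the assumed single-valuedness of $f$ (which is justified almost surely because $A$ is invertible with probability one, making the QR decomposition with non-negative diagonal unique), we conclude $f(gA) = gU = g\,f(A)$.

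Combining the two facts gives the result: for any measurable $B \subseteq U(N)$,
\begin{equation}
\mathbb{P}(gU \in B) = \mathbb{P}(g\,f(A) \in B) = \mathbb{P}(f(gA) \in B) = \mathbb{P}(f(A) \in B) = \mathbb{P}(U \in B),
\end{equation}
where the third equality uses that $A$ and $gA$ have the same law. The main subtlety is ensuring that $f$ is actually well-defined and equivariant, i.e.\ that the QR decomposition is unique; this is where the non-negative diagonal convention and the almost-sure invertibility of a Gaussian matrix are used. Everything else is a direct consequence of the unitary invariance of the complex Gaussian measure.
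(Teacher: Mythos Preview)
Your proof is correct and follows the same overall logic as the paper: reduce the claim to showing that $A$ and $gA$ have the same law, then use equivariance of the QR map $f$ to conclude. The execution differs in two minor but noteworthy ways.

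First, to establish $gA\sim A$, the paper works in real coordinates: it stacks real and imaginary parts via $p(A)=\begin{pmatrix}\operatorname{Re}A\\\operatorname{Im}A\end{pmatrix}$, applies the vec operator, and observes that left multiplication by $g$ becomes multiplication by the orthogonal matrix $\hat g\otimes I_N$, which preserves the standard Gaussian $\mathcal{N}(0,I_{2N^2})$. Your argument via the density $\propto\exp(-\tfrac12\|A\|_F^2)$, Frobenius-norm invariance, and unit Jacobian is the same fact expressed more directly, without the vec/Kronecker bookkeeping. Both are standard and equally valid; yours is shorter.

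Second, you explicitly justify $f(gA)=g\,f(A)$ via uniqueness of the QR factorization (with the non-negative diagonal convention, valid almost surely since $A$ is invertible a.s.). The paper simply asserts ``By definition, $gU=f(gA)$'' and moves on; your version fills in this step. This is a genuine improvement in clarity, since without the uniqueness convention the claimed equivariance could fail.
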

        \begin{proof}
        By definition, $gU = f(gA)$. Then it suffices to show $gA$ and $A$ are identically distributed.

        For complex matrices, we consider the two bijections. The first one is  $p:A\ra \begin{pmatrix}
            \re A\\\im A
        \end{pmatrix}$. Then $p(gA)=\begin{pmatrix}
            \re g &-\im g\\
            \im g & \re g
        \end{pmatrix}=\hat{g} p(A) $. It is easy to verify $\hat{g}$ is orthogonal. We then flatten the matrix with a vec operator
        \[\text{vec}(A)=(A_{11}, A_{12},\dots, A_{1N},\dots,A_{M1},\dots,A_{MN})\]
        The following property is well known.
        \begin{proposition}[Vec Operator Identity]
            $\text{vec}(gA)=(g\otimes I_N) \text{vec}(A)$
        \end{proposition}
        Where $\otimes$ is the Kronecker product.
        Then $\text{vec}(p(gA)) = (\hat{g}\otimes I_{N})\text{vec}(p(A))$. However, $\text{vec}(p(A))$ is just $(\re A_{ij}, \im A_{ij})$, which follows the distribution $\mathcal{N}(0,I_{2N^2})$, and $\hat{g}\otimes I_{N}$ is still orthogonal, it follows that $$\text{vec}(p(gA))\sim \mathcal{N}(0,(\hat{g}\otimes I_{N})I_{2N^2}(\hat{g}\otimes I_{N})^T)=\mathcal{N}(0,I_{2N^2})$$ Therefore $\text{vec}(p(gA))\sim \text{vec}(p(A))$. By bijectivity, $gA\sim A$. \qedhere
        \end{proof}
    \subsection{Diagonal Dataset}
        There is a equivalance relation among samples $W_k$: $W_k \sim \Omega_k\Tilde{W}_x\Omega_k^\dagger$, $\forall \Omega_k\in U(n)$, $\forall k$. Namely the equivalent classes of fluxes is a subset of $(U(n)/\text{Ad})^{N_\text{site}}$.
By the isomorphism
\begin{equation}
    U(n)/\text{Ad}\cong U(1)^n/S_n,
\end{equation}
we could generate plaquettes $W_k$ as diagonal matrices, i.e. $W_k=\text{diag} \{e^{i\theta_k^1},\dots, e^{i\theta_k^N}\}$.\\
Notice that each link appears exactly twice in all plaquettes, once in itself, and once inversed. For example, $U_{k}^x$ appears in itself in $W_k$ and inversed in $W_{k-\hat{y}}$. Then we have:
$$\prod_k \det W_k = \prod_{\mu, k} \det U_k^\mu (\det U_k^\mu)^{-1}=1$$
Specifically, since $\sum_k\im(\log(\det W_k)) = \im(\log(\prod\det W_k)) \mod 2\pi$, the discrete Chern number $\Tilde{C}$ is an integer.
\begin{proposition}\label{chern}
    $\Tilde{C}=\frac{1}{2\pi}\sum_xF_x= n\in \Zb$. 
\end{proposition}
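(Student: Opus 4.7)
The plan is to reduce the claim to a combinatorial identity about how link variables combine across neighboring plaquettes on the periodic lattice (torus). First I would apply the matrix identity $\tr\log(W) = \log\det(W)$, valid for any $W\in \mathrm{U}(N)$, to rewrite
\begin{equation*}
    \sum_k \im\,\tr\log W_k \;=\; \sum_k \im\log\det W_k.
\end{equation*}
Since $\im\log$ is additive modulo $2\pi$ (it is a branch of $\arg$, which is a group homomorphism $\mathrm{U}(1)\to \mathbb{R}/2\pi\mathbb{Z}$), this in turn equals $\im\log\prod_k \det W_k$ modulo $2\pi$. Thus the proposition reduces to showing that $\prod_k \det W_k = 1$.

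For the key step, I would expand each $W_k$ using its definition~\eqref{eq:fluxdef} as a product of four link matrices and apply multiplicativity of the determinant. Since each $U^\mu_k \in \mathrm{U}(N)$, the quantity $\det U^\mu_k$ is a unit complex phase. The goal is then to show that, when the product over all sites $k$ is taken on a torus, each such phase $\det U^\mu_k$ appears exactly once in its direct form and exactly once as its Hermitian-conjugate (inverse) form, so that the two contributions cancel pairwise and the total product collapses to $1$.

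The main obstacle I anticipate is the index bookkeeping: the Wilson loop $W_{i,j}$ in~\eqref{eq:fluxdef} contains four link factors at four different sites, and for each fixed link $U^\mu_{k}$ one must identify the two plaquettes in which it enters and verify that the induced orientations are opposite. Concretely, a horizontal link $U^x_{i,j}$ enters $W_{i,j}$ as a forward factor and some neighboring plaquette (e.g.\ $W_{i,j+1}$) via its Hermitian conjugate; the analogous statement must be checked for vertical links. This is a discrete incarnation of the fact that in a CW decomposition of the $2$-torus into unit squares, each edge is shared by exactly two faces with opposite induced orientations. Once this local bookkeeping is pinned down, the cancellation in $\prod_k \det W_k$ is automatic because $\det$ is a homomorphism and the abelian phases reorder freely.

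Finally, once $\prod_k \det W_k = 1$ is established, we conclude $\sum_k \log\det W_k \in 2\pi i\,\Zb$, so $\tilde{C} = n$ for some $n\in\Zb$. Note that the diagonality of $W_k$ in the diagonal dataset construction plays no role in this proof: integrality of $\tilde{C}$ is a purely topological/combinatorial consequence of the plaquette construction on a closed periodic lattice and holds equally in the general (non-diagonal) setting. The diagonal hypothesis is only needed to parametrize gauge orbits via $\mathrm{U}(1)^N/S_N$, but the argument for Proposition~\ref{chern} itself uses only multiplicativity of $\det$ and the pairwise cancellation of links.
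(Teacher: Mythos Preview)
Your proposal is correct and follows essentially the same route as the paper: reduce to showing $\prod_k \det W_k = 1$ via the identity $\tr\log = \log\det$ and additivity of $\im\log$ modulo $2\pi$, then observe that each link enters exactly two plaquettes with opposite orientation so that the phases cancel pairwise. Your added remark that diagonality plays no role is accurate and worth keeping, though the paper's argument likewise never invokes it.
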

Then the necessary condition for a set of plaquettes to be generated from some links is:
\begin{equation}\prod_{k}\prod_{\lambda} e^{i\theta_k^\lambda}=e^{i\sum_k\sum_\lambda \theta_k^\lambda}=1,\label{necessary}
\end{equation}

On the other hand, given any $W_k$ that is diagonal per site, suppose it is generated by diagonal links $U_k^\mu=\text{diag} \{e^{i\tau_{k,\mu}^{1}},\dots, e^{i\tau_{k,\mu}^N}\}$. Then for each index $\lambda$ we have the following equations:
\begin{equation}
    \prod e^{i\tau_{k,x}^\lambda}e^{i\tau_{k+\hat{x},y}^\lambda}e^{-i\tau_{k+\hat{y},x}^\lambda}e^{-i\tau_{k,y}^\lambda}=1, \forall k
\end{equation}
This implies a necessary condition for $W_k$ to be generated from diagonal links is that, for any $\lambda$, $\sum \theta_k^\lambda=0$. We omit the subscript $\lambda$ for now. \\
Recall that $k$ is the flattened index of $(i,j)$, which could have the possible form $k=N_\text{site}i+j$. If we further flatten the index $(k,\mu)$ as $k$ for $\mu=x$, $k+N_\text{site}$ for $\mu=y$, then the equations become linear:
\begin{equation}
    \tau_{k}+\tau_{(k+N_\text{site}+1)\,\text{mod}\, 2N_\text{site}}-\tau_{(k+N_x)\, \text{mod}\, N_\text{site}}-\tau_{k+N_\text{site}}=\theta_{\hat{x}}, \forall k
\end{equation}
Which is just:
\begin{equation}
 \left(\begin{array}{cccccccccc}
      1&&-1&& & -1&1&&&\\
      &\ddots&&\ddots& &&\ddots&\ddots&&\\
      -1&&\ddots&&-1 &&&\ddots&\ddots&\\
      &\ddots&&\ddots& &&&&\ddots&1\\
      &&-1&&1& 1&&&&-1\\
 \end{array}\right)^T\left(\begin{array}{c}\tau_0\\\tau_1\\\vdots\\\tau_{2N_\text{site}-1}\end{array}\right)=
\left(\begin{array}{c}\theta_0\\\theta_1\\\vdots\\\theta_{N_\text{site}-1}\end{array}\right)\label{eq:linear_equation}
\end{equation}
The coefficient matrix has rank $N_\text{site}-1$, and it is solvable iff. $\sum_k\theta_{k}=0$, and that is exactly what the necessary condition specifies.
Therefore, the fluxes $W_k$ can be generated from diagonal $U_k^\mu$ if and only if
\begin{equation}
\forall \lambda,\  \prod_k  e^{i\theta_k^\gamma}=1.
\label{sufficient}
\end{equation}
This determines a submanifold $M'$ in $M=\{m\in U(1)^{N\times N_{\text{site}}}: m \text{ satisfies }\eqref{necessary}\}$ with codimension $N-1$. 
With the natural metric on $U(N)^{N_\text{site}}\supset M$,  defined as $d(g,h)=\|\psi_k^\lambda\|_2$,\ where $\psi_k^\lambda$ are phase angles of eigenvalues of $gh^{-1}$, $M'$ is a $\pi\sqrt{\frac{N}{N_\text{site}}}$-net of $M$.
For each channel $\lambda$, suppose $\sum_k \theta_{k}^\lambda = \phi_\lambda$, $\phi_\lambda \in [-\pi, \pi)$. Let the new $\theta$ be $\Tilde{\theta}_{k}^\gamma = \theta_{k}^\lambda + -\phi_k/N_\text{site}$. 
Then $$d(W, \Tilde{W})\leq \sqrt{\sum_{k,\lambda} \left(\frac{1}{N_\text{site}}\right)^2\phi_k^2}\leq \pi \sqrt{\frac{N}{N_\text{site}}}.$$As the number of sites gets larger (the grid gets more refined), the net gets denser. We can further extend the sufficient condition by considering the permutations, since the permutation matrices are also unitary and their actions on fluxes are adjoint.

We now propose the diagonal data generation scheme:
\begin{enumerate}
    \item Generate label $F_k\in [-\pi, \pi)$, such that $\sum F_k = 2\pi n$.
    \item If only zero samples: check if $\sum F_k=0$.
    \item For every $k$ but the last one, generate $(\phi_k)_x$ such that $\sum_\lambda\phi_k^\lambda=F_k$.
    \item For every $k$ but the last one, let $W_k$ be $\text{diag} \{e^{i\theta_k^1},\dots, e^{i\theta_k^N}\}$.
    \item Let the last $W_{\hat{k}}$ be $\prod_{k\neq \hat{k}} W_{k}^{-1}$.
\end{enumerate}
The last product will not cause confusion since diagonal matrix multiplication is commutative.

It could also go the other way around: generate the fluxes first, then find a solution to \eqref{eq:linear_equation} to get the links. This way, we could operate directly on the distribution of eigenvalues, thus customizing the data generation process. Furthermore, the diagonal dataset reduces the computation cost significantly for training.

For validation, we show in Figure~\ref{fig:diag-loss} the loss curves and in Table~\ref{tab:diag-acc} accuracies of evaluation on nontrivial, general (non-diagonal) datasets, of a training run on a diagonal, trivial dataset.
\begin{figure}
    \centering
    \includegraphics[width=1\linewidth]{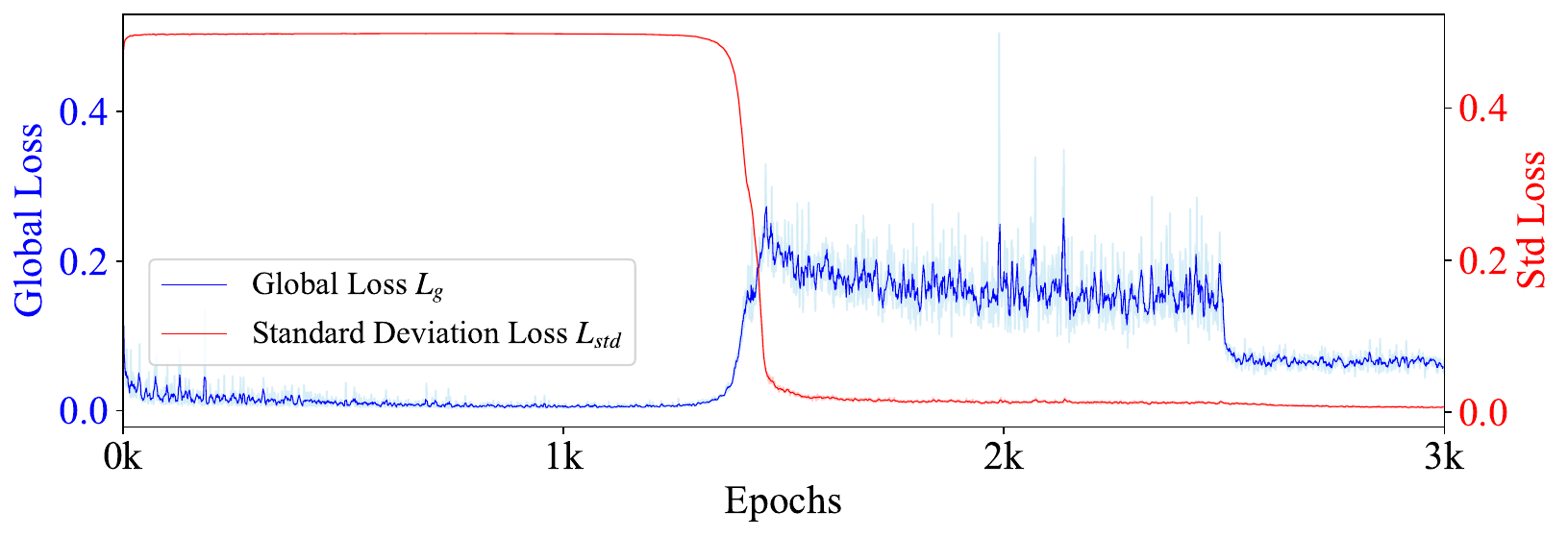}
    \caption{Global Loss and Standard Deviation Loss curve of the baseline model, trained on a diagonal, trivial dataset, to learn the Chern number on a $5^2$ grid, with $4$ filled bands.}
    \label{fig:diag-loss}
\end{figure}
 \begin{table}[]
        \centering
        \caption{Accuracy of the same run in Figure~\ref{fig:diag-loss}, evaluated on 
 non-diagonal, non-trivial data on a $5\times 5$ grid, with $4$ filled bands. }
        \label{tab:diag-acc}
        \vspace{10pt}
        \begin{tabular}{ccccc}
            \toprule
             Seeds&  No.1&No.2&No.3&No.4 \\
             \midrule
             Accuracy& $92.7\%$& $94.3\%$& $95.4\%$& $93.8\%$\\
             \bottomrule
        \end{tabular}

    \end{table}
\newpage
\section{Proof of the Universal Approximation Theorem}
    \label{appendix:proof}\dpinline{I have gone through the proof and made some minor changes and clarifications, mainly to make it a bit more readable.}
\lhnote{Thanks!}
Here we give the complete proof of Theorem \ref{UAT}, or, rather its stronger form in Theorem \ref{main}.

    We consider the network architecture GEBLNet. Given the flux tensor $W_k$, we stack the identity and its Herimitian conjugate to a second channel as 
    $$W'^\gamma_k =(W_{k,0}, W_{k,1}, W_{k, -1}):=(I,W_k, W_k^{-1}).$$
    Afterwards we put it through several blocks, each containing three layers: GEBL, GEAct and TrNorm. In this section, we ignore \trnorm\ Layers, since they are introduced to boost training results. We call each block a \textbf{``packed layer"}. 
    
    After several packed layers we calculate the trace per-channel and add a linear layer (the ``Dense layer") in the end. The Dense layer acts on the real and imaginary parts separately. Then we take the sum over the site index (to calculate the topological invariant).
    
    So the outputs have the following form:
    $$W_k\longmapsto w\cdot \hat{\sigma}\circ \text{GEBL}_n\circ \cdots \circ\hat{\sigma}\circ \text{GEBL}_1(W_k)) + b.$$
    Where $\hat{\sigma}(W^\gamma_k)=\sigma(\tr W^\gamma_k) W^\gamma_k$. We denote the set of these functions by $\mathcal{BLN}_{\sigma} (G)$, where the subscript $\sigma$ indicates the choice of activation function.\dpnote{Why introduce $\sigma$ instead of just $\sigma$? The hatted version is not defined.} We further denote by $\mathcal{BLN}_{\sigma}^k(G)$ the subset of $ \mathcal{BLN}_{\sigma}(G)$ with $k$ packed layers.

    Since we attempted to learn local quantities $F(W_k)$, we omit the subscript $k$. Furthermore, we treat the flux $W$ as an abstract element in the Lie group $G$, denoted as $g$. In this case where the input channel size is one, we propose the main result:

    \begin{theorem}[Universal Approximation Theorem] For any activation function $\sigma=\tils\circ \re$, where $\tils$ is bounded and non-decreasing, $\mathcal{BLN}_{\sigma}(G)$ is dense in $L^2_{class}(G)$.
    \label{main}
    \end{theorem}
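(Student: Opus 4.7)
By Corollary~\ref{cor:trgn}, the space $\bigcup_M \{f(\tr g, \ldots, \tr g^M) : f \in C(\mathbb{C}^M)\} \cap L^2(G)$ is dense in $L^2_{\text{class}}(G)$, and since $G$ is compact the map $g \mapsto (\tr g^k)_{k=1}^M$ has compact image $K_M \subset \mathbb{C}^M$. It therefore suffices to show that, for every $M \ge 1$, any continuous function on $K_M$ can be approximated uniformly by the per-site output of some GEBLNet. The plan is to exhibit a subfamily of GEBLNet outputs that implements a classical single-hidden-layer MLP on the real feature vector $\mathbf{x}_M = \{\text{Re}(\tr g^a), \text{Im}(\tr g^a)\}_{|a| \le 2^M}$, and then to invoke the Leshno--Lin--Pinkus--Schocken universal approximation theorem.

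I would start by analysing a single packed layer (one GEBL followed by one GEAct, ignoring TrNorm, which is purely for training stability). Given the stacked input $(I, W, W^{-1})$, the GEBL layer outputs any complex linear combination $X_\gamma = \sum_{|a| \le 2} c_{\gamma,a} W^a$, and GEAct turns this into $\tilde{\sigma}(\text{Re}(\tr X_\gamma))\,X_\gamma$. Writing $L_\gamma = \tr X_\gamma$ and $u_\gamma = \text{Re}(L_\gamma)$, the \trlayer and \dense layers produce, per site,
\begin{equation*}
  f(W) = \sum_\gamma \tilde{\sigma}(u_\gamma)\,\bigl[W_R^\gamma \text{Re}(L_\gamma) + W_I^\gamma \text{Im}(L_\gamma)\bigr] + b,
\end{equation*}
where the complex coefficients $c_{\gamma,a}$ make $u_\gamma$ an arbitrary real-affine functional of $\mathbf{x}_1$ (with the affine shift absorbed from the constant contribution $\tr I = N$), while $W_R^\gamma, W_I^\gamma \in \mathbb{R}$ are free Dense weights. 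Choosing $(W_R^\gamma, W_I^\gamma) = (w_\gamma, 0)$ collapses the bracket to $w_\gamma u_\gamma$ and yields $\sum_\gamma w_\gamma \phi(u_\gamma) + b$ with activation $\phi(t) := \tilde{\sigma}(t)\,t$, which is precisely the classical single-hidden-layer MLP family on $\mathbf{x}_1$.

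To reach traces $\tr W^a$ with $|a| \le 2^M$, I would stack $M$ packed layers. Each successive GEBL doubles the maximal polynomial degree in $W$ through its bilinear structure, so after $M$ layers there exist channels whose matrix part spans $\{W^a : |a| \le 2^M\}$, multiplied by nested products of $\tilde{\sigma}$-factors accumulated from the interleaved GEAct layers. The main technical obstacle is taming these input-dependent scalar prefactors. The plan is to propagate a dedicated identity channel, whose pre-trace $\tr I = N$ is constant, so its GEAct output is a fixed scalar multiple of $I$; by pairing the polynomial-building channels with this clean identity channel in the final GEBL, the spurious $\tilde{\sigma}$-factors can be absorbed into the Dense weights, up to a residual error that vanishes by the boundedness of $\tilde{\sigma}$. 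The conclusion is that for every $M$, some GEBLNet output realises to arbitrary precision the family $\sum_\gamma w_\gamma \phi(L_\gamma(\mathbf{x}_M)) + b$ with $L_\gamma$ arbitrary real-affine on $\mathbf{x}_M$.

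Finally, since $\tilde{\sigma}$ is bounded and (necessarily) non-constant, $\phi(t) = \tilde{\sigma}(t)\,t$ is non-polynomial (its slope $\phi(t)/t$ has distinct limits $L_\pm$ at $\pm\infty$). The Leshno--Lin--Pinkus--Schocken theorem then guarantees that such single-hidden-layer networks are dense in $C(K_{2^M})$ for every $M$. Composed with the reduction at the start, this establishes density of $\mathcal{BLN}_\sigma(G)$ in $L^2_{\text{class}}(G)$. The hardest step is the inductive construction in the third paragraph: rigorously controlling the accumulation of nested $\tilde{\sigma}$-factors across packed layers so that the final layer still realises the clean MLP form required to invoke the classical UAT.
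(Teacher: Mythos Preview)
Your overall strategy coincides with the paper's: reduce via Corollary~\ref{cor:trgn} to approximating bounded functions of $(p_1,\dots,p_{2^M})$, show that a depth-$M$ \geblnet\ realises a one-hidden-layer MLP on these traces with activation $\phi(t)=\tils(t)\,t$, and then invoke a classical universal approximation theorem. Your base case $M=1$ is correct and essentially identical to the paper's.

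The gap is in your inductive mechanism. Pairing a polynomial-building channel with the clean identity channel in the final \gebl\ does \emph{not} remove the input-dependent prefactors accumulated in earlier layers. Concretely, after $M-1$ packed layers a channel carrying $W^{a}$ has the form $S_a(W)\,W^{a}$, where $S_a(W)$ is a nested product of factors $\tils(\re\tr(\cdot))$ that genuinely depend on $W$. Multiplying by a constant multiple of $I$ in the last \gebl\ gives $c\,S_a(W)\,W^{a}$; the factor $S_a(W)$ is still $W$-dependent and therefore cannot be ``absorbed into the Dense weights'', which are constants. Mere boundedness of $\tils$ does not make $S_a(W)$ close to a constant, so the residual error does not vanish. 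The paper's inductive step (Lemma~\ref{fundamentallemma}) resolves this differently: it does not try to isolate pure monomials $W^a$, but instead shows that the \emph{coefficient vectors} $a_i=(a_i^t)_t$ in the representation $\sum_t a_i^t g^t$ can be driven to arbitrary \emph{constants} at every depth. The key device is a scaling/saturation trick: choose vectors $b^{z_t}$ with $\re(b^{z_t}\!\cdot p)$ bounded below on the compact trace-image $K$, then rescale by a large $M$ so that $\tils(M\,b^{z_t}\!\cdot p)\to\sup\tils$ uniformly. This forces every intermediate $\tils$-factor to be approximately constant, after which the bilinear convolution $b\mapsto b*b$ and a Vandermonde-type spanning argument recover all constant coefficient vectors at the next depth. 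Your identity-channel idea does not supply this saturation, and without it the accumulated prefactors remain uncontrolled.
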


    The proof of this will require the following lemma.
    \begin{lemma}
$\mathcal{BLN}_{\sigma}^k(G)$ is dense in $\{f(p_1,\cdots,p_{2^k}):\|f\|_\infty<\infty\}\subset L^\infty(G)$, where $p_i=\tr g^i$.
\label{fundamentallemma}
    \end{lemma}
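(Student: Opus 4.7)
The plan is to prove the lemma by induction on the number of packed layers $k$, guided by two observations: each GEBL can at most double the maximal power of $g$ present in the matrix part of a channel, and the last GEAct together with the Trace and Dense layers realises a sigmoidal-MLP-like structure on the traces $p_j=\tr g^j$. Throughout, I abbreviate $x=(\mathrm{Re}\,p_1,\mathrm{Im}\,p_1,\dots,\mathrm{Re}\,p_{2^k},\mathrm{Im}\,p_{2^k})$ and use $p_{-n}=\overline{p_n}$ on the unitary group.

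For the base case $k=1$, the single GEBL---which by its stacking construction has $I$ and $g^{-1}=g^{\dagger}$ among its inputs in addition to $g$---produces for each output channel $\gamma$ a matrix $L_\gamma=\sum_{n=-2}^{2}\beta_{\gamma n}g^{n}$ with arbitrary complex $\beta_{\gamma n}$. A short computation exploiting the complex freedom in these coefficients shows that $A_\gamma:=\mathrm{Re}\,\tr L_\gamma$ and $B_\gamma:=\mathrm{Im}\,\tr L_\gamma$ can be prescribed independently as arbitrary real affine functions of $x$. Applying GEAct, Trace and the real-valued Dense layer, the scalar output takes the form
\begin{equation}
f(g)=\sum_\gamma \tils(A_\gamma(x))\bigl[w^{\mathrm{Re}}_\gamma A_\gamma(x)+w^{\mathrm{Im}}_\gamma B_\gamma(x)\bigr]+b.
\end{equation}
Choosing $w^{\mathrm{Re}}_\gamma=0$ and imposing $\beta_{\gamma,-n}=\overline{\beta_{\gamma n}}$ so that $B_\gamma$ reduces to a constant recovers standard sigmoidal MLP neurons of the form $c_\gamma\tils(a_\gamma\cdot x+c'_\gamma)$. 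Cybenko's universal approximation theorem for bounded non-decreasing activations then yields density in continuous---and, in the relevant topology, in bounded---functions of $(p_1,p_2)$.

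For the induction step, assume the statement for $k$ and consider $k+1$ packed layers. The idea is to run the first $k$ layers in a small-signal regime: insert an overall rescaling factor $\varepsilon\to 0$ into the initial GEBL so that every intermediate $\tr L$ is $O(\varepsilon)$ and $\tils(\tr L)=\tils(0)+O(\varepsilon)$ is approximately constant. To leading order in $\varepsilon$, each of the first $k$ packed layers then acts as a pure bilinear map scaled by $\tils(0)$; their composition produces channels whose matrix parts are arbitrary polynomials in $g,g^{-1}$ of degree $\leq 2^{k}$, up to the explicit scalar $\tils(0)^{k}\varepsilon^{k}$. The $(k+1)$-th GEBL then forms bilinear products of these channels, delivering polynomials of degree $\leq 2^{k+1}$; absorbing the explicit scalars into the $(k+1)$-th layer's coefficients reduces the final GEAct--Trace--Dense sequence to exactly the $k=1$ form of the base case, now built from the expanded vector $(\mathrm{Re}\,p_j,\mathrm{Im}\,p_j)_{j=1}^{2^{k+1}}$. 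A second application of Cybenko, together with the limit $\varepsilon\to 0$, completes the step.

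The main obstacle is the $g$-dependent scalar $\tils(\tr L)$ that every intermediate GEAct attaches to its channel: without control of these factors, the matrix content of a deep channel is a polynomial in $g$ whose coefficients are themselves nontrivial functions of $p_j$, and the clean degree-doubling accounting breaks down. The small-signal regime above is the key device that suppresses this pollution to $O(\varepsilon)$. Making the argument rigorous---in particular showing that the $O(\varepsilon)$ deviation of each intermediate $\tils$ from $\tils(0)$ can be made uniformly small over the compact group $G$ and absorbed into the approximation tolerance of the outer Cybenko step---is the delicate part of the proof, and is where the boundedness and continuity of $\tils$ are essential.
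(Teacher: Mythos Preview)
Your inductive plan is conceptually sound and the base case is essentially correct (and arguably cleaner than the paper's: you set $w^{\Re}_\gamma=0$ and obtain genuine sigmoidal neurons $c_\gamma\tils(a_\gamma\cdot x+c'_\gamma)$, whereas the paper sets $w_{\Im}=0$ and works with the composite activation $x\mapsto x\,\tils(x)$). The induction step, however, takes a route \emph{opposite} to the paper's, and this is where a real gap appears.

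You neutralise the intermediate GEAct factors by a \emph{small-signal} regime: drive all intermediate traces to $0$ so that $\tils(\tr L)\approx\tils(0)$. This requires two things that do not follow from the stated hypotheses (bounded, non-decreasing): (i) $\tils(0)\neq 0$, otherwise your explicit scalar $\tils(0)^{k}\varepsilon^{k}$ annihilates every channel before it reaches the last layer; and (ii) continuity of $\tils$ at $0$, otherwise $\tils(O(\varepsilon))$ need not converge to anything. You explicitly invoke ``continuity of $\tils$'' at the end, but the lemma does not assume it. Fixing (i) by shifting each channel with a bias $cI$ so that $\tr L\approx c$ with $\tils(c)\neq 0$ is possible, but then each channel carries an $O(1)$ identity part that dominates the bilinear products, and you must carefully cancel it layer by layer---a nontrivial bookkeeping step you have not addressed. (Incidentally, with pure bilinears the scalar after $k$ packed layers is $\tils(0)^{2^{k}-1}\varepsilon^{2^{k-1}}$, not $\tils(0)^{k}\varepsilon^{k}$; this does not affect validity but signals that the accounting needs care.)

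The paper avoids all of this by working in the \emph{large-signal} regime: it chooses coefficient vectors $b$ close to $e_0=(1/N,0,\dots,0)$ so that $b\cdot p>0$ uniformly on the compact group, then scales by $M\to\infty$ so that $\tils(Mb\cdot p)\to\sup\tils$. This uses only the existence of the monotone limit at $+\infty$, which is automatic for bounded non-decreasing $\tils$, and needs no continuity and no nonvanishing at a finite point. The price is a slightly more delicate spanning argument: one must exhibit $2N{+}1$ such vectors $b^{z_t}$ whose self-convolutions $b^{z_t}*b^{z_t}$ span $\mathbb{C}^{2N+1}$, which the paper does via a Vandermonde-type choice $b^{z}=(1,z,\dots,z^{N})$. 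Your direct degree-doubling is more transparent when it applies, but the paper's saturation trick is what makes the proof go through under the weaker hypotheses actually assumed.
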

    \begin{proof}
        We prove this lemma by induction. For $k=1$, the output has the following form
        $$g\longmapsto \left(\sum_{i=0}^2 \alpha_i^t g^t\right)_i\quad  \longmapsto\quad \omega_i \tr\sigma\left(\sum_{i=0}^2 \alpha_i^t g^t\right)_i + b.$$
        Note that $\tr\hat{\sigma}(\sum_{i=0}^2 \alpha_i^t g^t)=\sigma(\re \alpha_i^tp_t)\alpha_i^tp_t$.
        For any channel index $i$, when taking only the real part (in other words, forcing $w_{i,\text{Im}}$ in the dense layer to be zero), the output is simply\dpnote{$\hat\sigma$ missing on the LHS of (31)?}
        \begin{equation}
            \sigma\left(\sum_t\re \alpha_i^t \re p_t - \im \alpha_i^t \im p_t \right)\left(\sum_t\re \alpha_i^t \re p_t - \im \alpha_i^t \im p_t \right)\\
            =\hat{\sigma}\left(\sum_i\re \alpha_i^t \re p_t - \im \alpha_i^t \im p_t\right)
        \end{equation}
        Therefore, it is essentially a one-hidden-layer fully connected network on 
        $\{(p_1, p_2)\}\simeq\Rb^4$ . Thus the set is dense.
        
        Assume this is the case for $n$, and we would like to prove the lemma for $n+1$. We denote $2^n=N$. Then the layer input has the following form:
        $$\Tilde{\sigma}\left(\sum_{t=0}^{N}a_i^t(p_0,\cdots,p_{N/2})p_t\right)\left(\sum_{t=0}^{N}a_i^t(p_0,\cdots,p_{N/2})g^t\right),$$

        Now the new ``$a^t_i$''(denoted as $b^t_i$) takes the following form:
        $$b_i^t=\sum_{p+q=t}\sum_{j,k}\alpha_{ijk}\Tilde{\sigma}(a_j^tp_t)\Tilde{\sigma}(a_k^tp_t)a_j^p a_k^q.$$
        Consider the bijection $F:\Cb^{N+1}\ra \mathrm{P}_N(\Cb)$, given by $F(\Vec{a})=\sum_{t}a_tz^t$. Using this we define 
        $$\Vec{a} * \Vec{b} = F^{-1}(F(\Vec{a})F(\Vec{b})).$$
        Then $$\Vec{b_i}=\alpha_{ijk}\Tilde{\sigma}(\Vec{a_j}\cdot \Vec{p})\Tilde{\sigma}(\Vec{a_k}\cdot \Vec{p})\Vec{a_j}\Vec{a_k}=\alpha_{ijk}H(p, \Vec{a_j}, \Vec{a_k}).$$
        This forms a linear space $\mathcal{B}^{n+1}\subset (L^\infty(K_{n+1}))^{2N+1}$. For simplicity we henceforth omit the subscript on $K_{n+1}$. 
        
        We assume $(a^t_i)_{t=0}^N$ could approximate any constant function of $p_1, \cdots, p_{N/2}$. This is trivially true when $n=1$, since it is a function on a constant and takes arbitrary constant values.

        Denoting $e_0=F^{-1}(1/d)$, where $d=\dim G$, we have $e_0\cdot p = 1$, $\forall p \in K$. Since $K$ is compact, there exists an open set $U$ s.t. $e_0\in \partial U$, and $b\cdot p \in (1,+\infty)$, $\forall b \in U, p\in K$. 
        
        On the other hand, it is easy to see that $\left\{b*b:b=\begin{pmatrix}
            1,z,\cdots,z^N
        \end{pmatrix}\right\}$ is linearly independent as a subset. This way we could choose $2N+1$ elements $\{\bzt\}_{t=0}^{2N}$ from its intersection with $U$, such that $\text{span}\{\bzt*\bzt\}=\Cb^{2N+1}$. 

        Now given a constant vector $\Vec{b}=(b_0,\cdots,b_{2N})$, there exits $\{\alpha_t\}$ such that $\Vec{b}=\alpha_t \bzt*\bzt$. We want to show that $\Vec{b}$ can be approximated by any precision $\epsilon$.
        
        Without loss of generality, assume $\sup \Tilde{\sigma}=1$ and  $\inf\Tilde{\sigma}=0$. Then, for all $\epsilon$, there exists $M_0>0$ such that for all $x>M_0/2$, $\Tilde{\sigma}(x)\in(\sqrt{1-\epsilon}, 1)$.  This gives 
        $$\left|\frac{d^2}{M^2} H\left(p, M e_0, Me_0\right)-1\right|=|1-\Tilde{\sigma}(M)^2|<\epsilon,\quad \forall M>M_0.$$

        By induction, there exists $a_t$ such that $\|a_t-\bzt\|_\infty<\min\{\epsilon, M/2\}$. Consider $$\Vec{b'}=\alpha_t \frac{1}{M^2}H(p, a_t, a_t) = \alpha_t  \Tilde{\sigma}(M\alpha_t\cdot p)^2a_t*a_t.$$
        Then \begin{eqnarray}
        |\Vec{b'}-\Vec{b}|&=&|\alpha_t (\Tilde{\sigma}(M\alpha_t\cdot p)^2 -1)\bzt*\bzt 
        + \Tilde{\sigma}(M\alpha_t\cdot p)(\bzt*\bzt-\alpha_t*\alpha_t)|\nonumber \\ \nonumber
        &\leq &\alpha_t \epsilon |\bzt*\bzt| + 2\epsilon |\bzt| + \epsilon ^2 \\ 
        &\leq& C(\Vec{b}, N)\epsilon.
        \end{eqnarray}


        When the coefficient functions approximate constants, the last layer is essentially a one-hidden-layer fully connected network over $p_1,\cdots,p_{2N}$. Similar to the $N=1$ case, as the width grows larger the network can approximate any function $f(p_1,\cdots,p_{2^N})$. the concludes the proof of the lemma. \qedhere


    \end{proof}

    We may now complete the proof of Theorem \ref{main}.
    \begin{proof}(Proof of Theorem \ref{main})\ \\
        Recall that by Theorem \ref{PW} the space of class functions $L^2_{class}(G)$ is spanned by symmetric polynomials in the eigenvalues of group elements. Since these symmetric polynomials can be expressed in terms of traces $\tr(g), \tr(g^n), \dots, \tr(g^M)$ it follows that any class function can be written as a function of these traces. Now, since $G$ is compact, we have $L^2(G)\supset L^\infty(G)$ and  $\|f\|_2\geq C\|f\|_\infty$.
        Therefore, for all $f\in L^2_\text{class}(G)$, and for any $\epsilon>0$, there exists $$ f_n=f_n(p_1,\cdots,p_n)\in L^\infty(K)$$ such that $\|f-f_n\|_2<1/2\epsilon$. By Lemma \ref{fundamentallemma} the function class $\mathcal{BLN}^k_{\hat{\sigma}}(G)$, consisting of neural networks with $k$ gauge equivariant bilinear layers, can approximate any function $f(p_1, \dots, p_k)$ arbitrarily well, provided $k$ is large enough. We deduce that there exists $g\in \mathcal{BLN}^n_{\hat{\sigma}}(G)\subset L^\infty(G)$\ such that $\|g-f_n\|_\infty<1/2C \epsilon$. Therefore $$\|g-f\|_2<(C\cdot 1/2C + 1/2)\epsilon=\epsilon.$$ This concludes the proof of the main theorem.
    \end{proof}

\newpage
\section{Further Details regarding the Network Structure}\label{appendix:conv}
    \paragraph{Architecture of \geconvnet}
    Figure~\ref{fig:complexity-acc} is a demonstration of the architecture of \geconvnet. It has \geconv layers implemented inside, therefore takes the links $U_k^\mu$ as inputs, beside the fluxes $W_k$, if the kernel size is set as positive. 
    \begin{figure}
        \centering
        \includegraphics[width=1\linewidth]{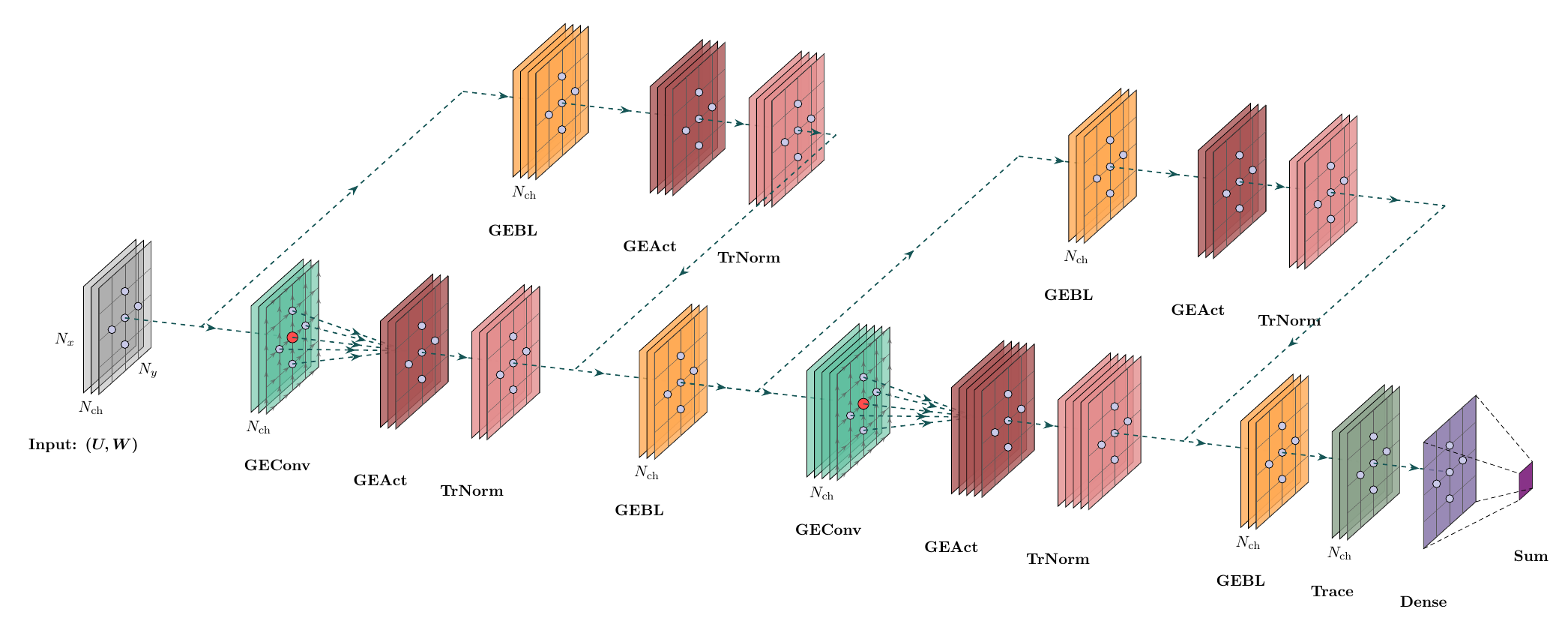}
        \vspace{-10pt}\caption{Architecture of GEConvNet. In this figure, the rectangles represent the spatial grid, the arrows on the grid represent links, and the number of layers ($N_{ch}$) represents the number of channels ($\gamma$). Each circle represents a site on the grid, and quantities on different sites do not interact with each other, except for the \geconv\ Layers, and the last summation on grids.}\label{fig:conv-arch}
    \end{figure}
    \paragraph{Configuration of the baseline model}
    Table~\ref{tab:baseline} lists the hyperparameters for \gebl layers in the baseline model. Since the \gebl layers are consecutive with channel size-maintaining layers, \geact\ and \trnorm\ layers in between, the former layer's output channel equals to the latter's input channel.
    \begin{table}[]
    \centering
    \caption{Configuration of \gebl layers for the baseline model. See its architecture in Figure~\ref{fig:gebl-arch}. In addition, every \gebl layer is followed by a \geact layer and a \trnorm layer that maintains channel size.}
    \vspace{10pt}
    \label{tab:baseline}
    \begin{tabular}{ccc}
         \toprule
         Layer & Input Channel & Output Channel\\
         \midrule
        \gebl\  1& 1&32\\
        \gebl\  2 & 32 & 16\\
        \gebl\  3 & 16 & 8\\
        \bottomrule
    \end{tabular}
\end{table}
\paragraph{Complexity-Accuracy Comparison}
Figure~\ref{fig:complexity-acc} is a comparison of model complexity and accuracy across different models. All the models are all trained on a $5\times 5$ grid with $4$ filled bands. Due to the instability of \trmlp, there are only $3$ success runs with this model.
\begin{figure}
        \centering
        \includegraphics[width=0.7\textwidth]{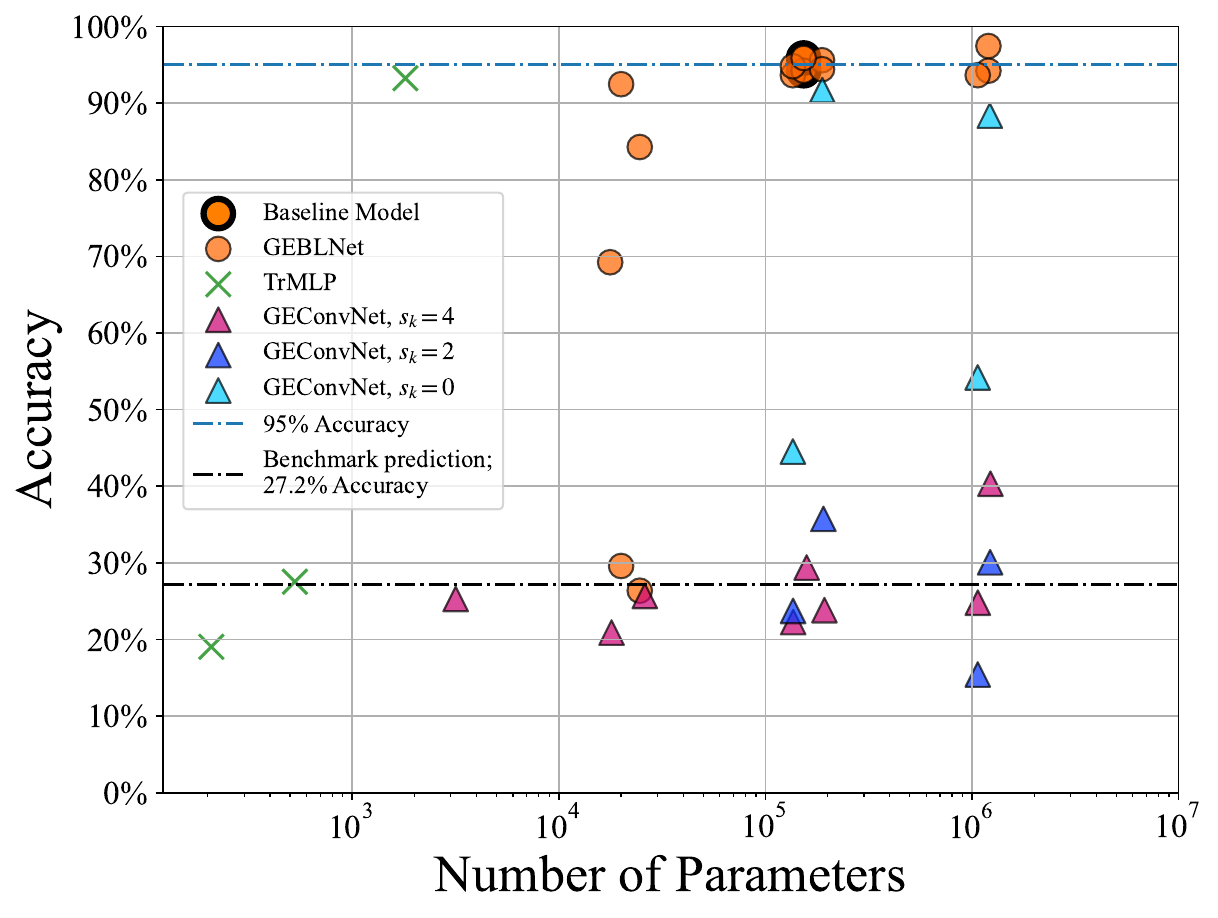}
        \vspace{-10pt}
        \caption{Comparison of model complexity and accuracy across different architectures. Complexity is measured by the number of learnable parameters, and $s_k$ denotes the kernel size for \geconv\   layers. Each marker represents a training run with variations in models, learnable parameters, and random seeds.}
        \label{fig:complexity-acc}
    \end{figure}
\paragraph{Global Loss Curve for Training on Higher Order Chern Numbers}
Figure~\ref{fig:loss-highdim} shows the global loss curve for training on second order Chern numbers. Since we adopt the $L_1$-norm $\|\cdot\|_1$ here, $L_{\mathrm{g}}$ essentially measure the mean absolute error of global outputs.
\begin{figure}
        \centering
        \includegraphics[width=0.7\textwidth]{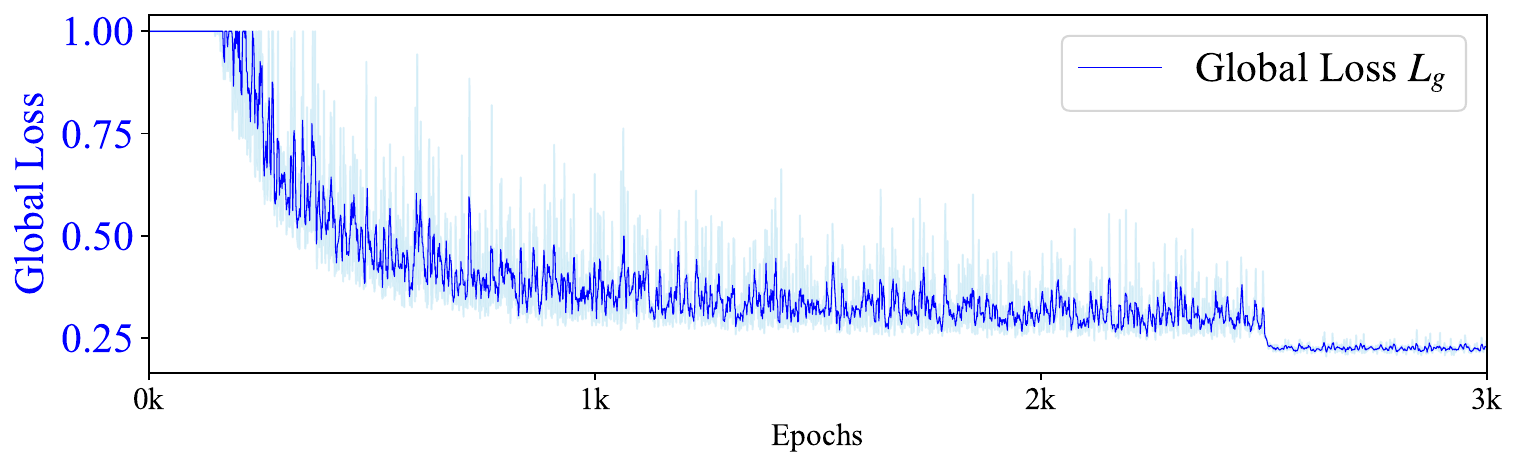}
        \vspace{-10pt}\caption{Global validation loss curve of the baseline model, trained on a $3^4$ grid, with $3$ filled bands to learn the second order Chern number $\tilde{C}_2$.}
        \label{fig:loss-highdim}
    \end{figure}




\end{document}